\theoremstyle{plain}
\newtheorem{theorem}{Theorem}
\newtheorem{lemma}{Lemma}
\theoremstyle{definition}
\newtheorem{definition}{Definition}
\theoremstyle{remark}
\newcommand{\myparagraph}[1]{\vspace{1.45pt}\noindent{\bf #1:}}
\newcommand{\ie}{\textit{i.e.}}
\newcommand{\vectorname}[1]{{\mathrm{\mathbf{#1}}}}
\newcommand{\norm}[1]{\left\lVert#1\right\rVert}
\newcommand{\rank}[1]{\operatorname{rank}(#1)}
\icmltitlerunning{A Closer Look at Multimodal Representation Collapse}
\begin{document}

\twocolumn[
\icmltitle{A Closer Look at Multimodal Representation Collapse}




\begin{icmlauthorlist}
%
\icmlauthor{Abhra Chaudhuri}{fre}
\icmlauthor{Anjan Dutta}{surrey}
\icmlauthor{Tu Bui}{fre}
\icmlauthor{Serban Georgescu}{fre}
\end{icmlauthorlist}

\icmlaffiliation{fre}{Fujitsu Research of Europe}
\icmlaffiliation{surrey}{University of Surrey}

\icmlcorrespondingauthor{Abhra Chaudhuri}{abhra.chaudhuri@fujitsu.com}

\icmlkeywords{Machine Learning, ICML}

\vskip 0.3in
]



\printAffiliationsAndNotice{}  

\begin{abstract}

    We aim to develop a fundamental understanding of modality collapse, a recently observed empirical phenomenon wherein models trained for multimodal fusion tend to rely only on a subset of the modalities, ignoring the rest. We show that modality collapse happens when noisy features from one modality are entangled, via a shared set of neurons in the fusion head, with predictive features from another, effectively masking out positive contributions from the predictive features of the former modality and leading to its collapse. We further prove that cross-modal knowledge distillation implicitly disentangles such representations by freeing up rank bottlenecks in the student encoder, denoising the fusion-head outputs without negatively impacting the predictive features from either modality. Based on the above findings, we propose an algorithm that prevents modality collapse through explicit basis reallocation, with applications in dealing with missing modalities. Extensive experiments on multiple multimodal benchmarks validate our theoretical claims. Project page: \url{https://abhrac.github.io/mmcollapse/}.

\end{abstract}

\section{Introduction}

\begin{figure}[!t]
    \centering
    \includegraphics[width=\columnwidth]{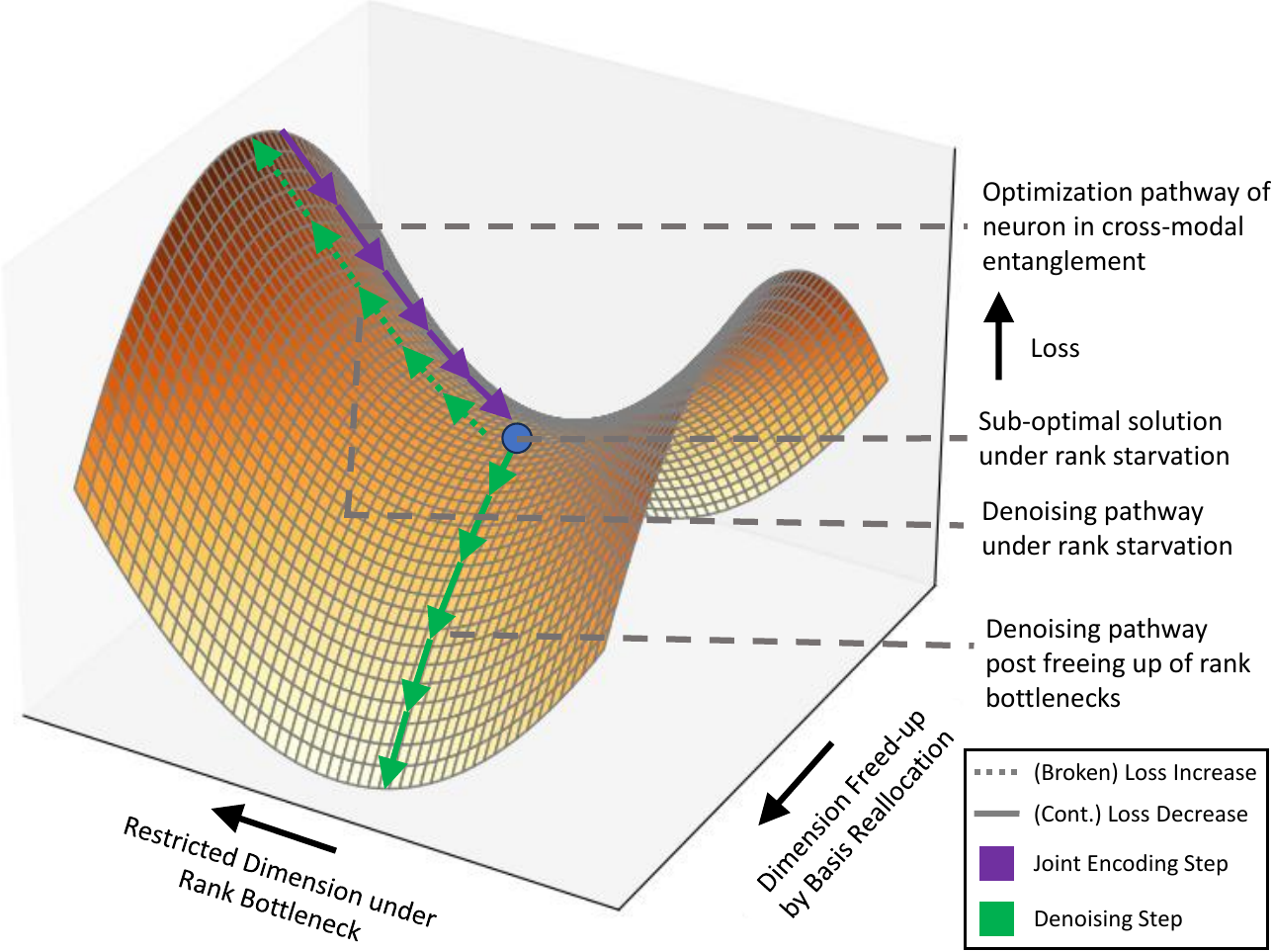}
    \caption{
    When noisy features of one modality exist in entanglement with predictive features of another in the fusion head (the probability of which increases with the number of modalities), it results in a sub-optimal solution wherein the predictive value of the former modality is diminished by the inevitable existence of noisy features. Freeing up rank bottlenecks allows for the denoising of such features along independent dimensions without affecting the latter modality, while simultaneously allowing the predictive features of the former modality to contribute to loss reduction.
    }
    \label{fig:teaser}
\end{figure}

\begin{figure*}[!ht]
    \centering
    \includegraphics[width=0.95\textwidth]{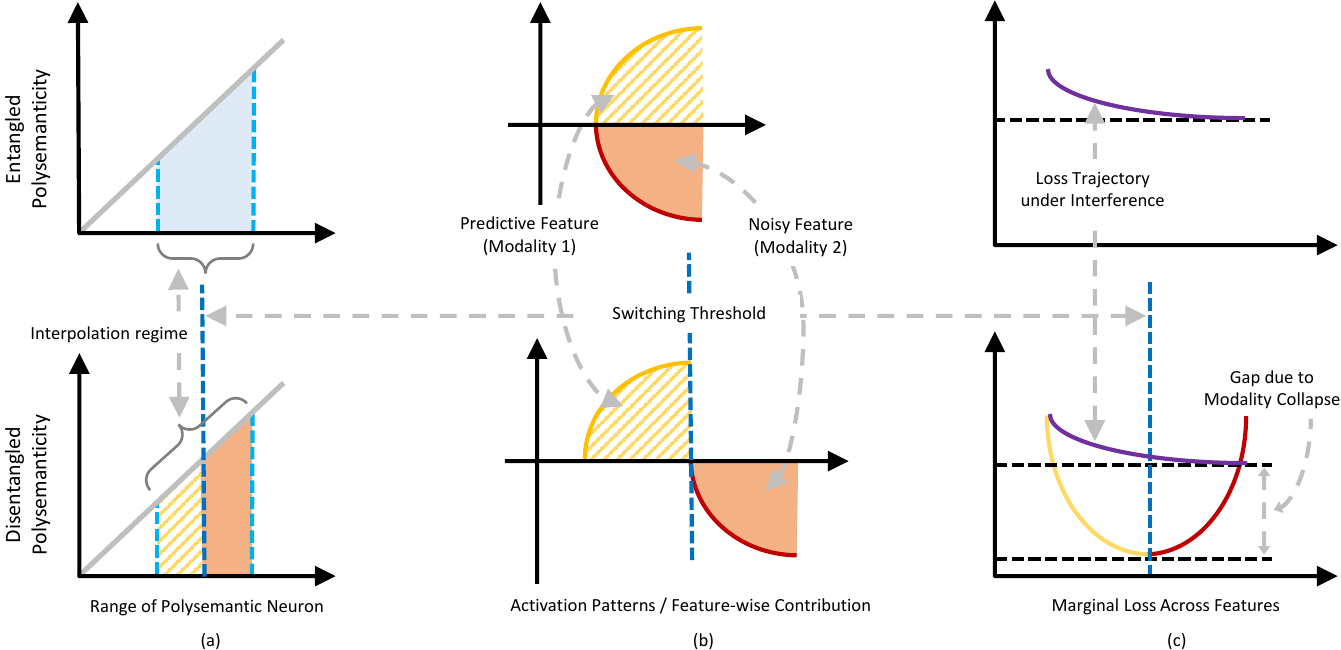}
    \caption{Polysemanticity with and without feature interference between noisy and predictive features. All horizontal axes correspond to the value of the weight of the polysemantic neuron.
    For (b) and (c), the vertical axes correspond to the title of the columns.
    When the predictive features of modality 1 (M1) and the noisy features of modality 2 (M2) activate along the same region in the interpolation regime of the same neuron (a - top), it prevents the predictive features of M2 from contributing to loss minimization (c - top) because of the unintended inclusion of and the unwanted interference from the noisy features of M2 (b - top), leading to its collapse.
    However, when they are disentangled (here, by being mapped to disjoint sub-regimes in the weights-space around a switching threshold: a - bottom), they result in non-interfering activation patterns (b - bottom), and effectively, a feature-wise separable effect on the marginal loss (c - bottom). 
    }
    \label{fig:interference}
\end{figure*}

A number of recent works in the multimodal learning literature have observed that models that aim to learn a fusion of several modalities often end up relying only on a subset of them \cite{javaloy2022modalitycollapse, wu2024muse}. This phenomenon, termed as \emph{modality collapse}, has been empirically observed across a diverse range of fusion strategies \cite{javaloy2022modalitycollapse, ma2022mt, zhang2022m3care, Zhou2023Collapse, wu2024muse}, and has serious implications for the scenario when certain modalities can go missing at test time \cite{you2020grape, ma2022mt, wu2024muse}. If a model is reliant only on a subset of modalities and if it is those that specifically go missing at test time, the model could end up completely non-functional.
Although there have been several attempts at mitigating modality collapse based on a priori conjectures about what might be causing them, such as conflicting gradients \cite{javaloy2022modalitycollapse} or interactions between data distributions and the fusion strategy \cite{ma2022mt}, to the best of our knowledge, there have been no prior efforts towards developing a bottom-up understanding of the underlying learning-theoretic phenomena at play.

We aim to bridge this gap by developing a mechanistic theory of multimodal feature encoding that is agnostic of the specific choice of fusion strategy (see Task Setup in \cref{sec:theory}). We start by showing that modality collapse arises as a result of an unintended entanglement among the noisy and the predictive features of different modalities through a shared set of polysemantic neurons \cite{elhage2022superposition}, which we observe, via \cref{lemma:polysem_col}, to increase quadratically with the number of modalities.
It implies that predictive features of some modalities cannot be learned without also including noisy features from the other modalities. The noisy features then effectively suppress the predictive value of the modality that they come from, leading to their observed collapse in the fused representation, a process we formalize in \cref{thm:interference}.

As depicted in \cref{fig:teaser}, in the optimization landscape, collapse corresponds to a suboptimal solution such that any step around it along the dimension of entanglement (which is the only available dimension for optimization in the given state), would lead to a simultaneous denoising of one modality and the forgetting of predictive features from the other.
If the latent factors underlying the modalities are sufficiently complementary, we show that cross-modal predictive--predictive feature entanglements are less likely to occur than predictive--noisy entanglements (\cref{lemma:feature_entanglement_prob}) -- so, we mainly focus on the latter in this work.

We find that this cross-modal entanglement of features happens due to faulty neural capacity
allocation \cite{Scherlis2022PolysemanticityAC} among modalities during the optimization of the fusion head (illustrated in \cref{fig:interference}), which we observe, in \cref{lemma:gradrank}, to be a result of the well-known low-rank simplicity bias of neural networks \cite{huh2023simplicitybias} limiting the rank of the gradient updates received at any given layer.
Consequently, through \cref{thm:polysem_bottleneck}, we arrive at the result that this gradient-rank bottleneck forces SGD to parameterize the fusion head neurons in a polysemantic manner.


Interestingly, we observe in \cref{thm:kd_freeup}, that knowledge distillation into the modalities that collapse, from the ones that survive, implicitly averts cross-modal polysemantic entanglements. It does so by freeing up rank bottlenecks at the level of the student encoders.
As a result, as again shown in \cref{fig:teaser}, the noisy features of the modalities that would otherwise collapse, are allocated dedicated dimensions in the latent space, which the fusion operator can then leverage to denoise the output representations. This allows for the complete incorporation of predictive features from all the modalities without any noisy interference, thereby preventing collapse.

Under the condition of identifiability \cite{gulrajani22identifiability} of modality-specific causal factors up to equivariances of the underlying mechanisms \cite{ahuja2022properties}, we propose an algorithm called Explicit Basis Reallocation (EBR), which automatically identifies cross-modal feature entanglements and learns independent denoising directions in the latent space to counteract their hindrance on empirical risk minimization. Consequently, the concrete feature-to-basis mapping across modalities obtained from EBR can be used to identify suitable substitution candidates for dealing with missing modalities at test-time.

To summarize, we (i) provide a theoretical understanding of modality collapse based on polysemantic neurons in the fusion head leading to unwanted cross-modal entanglements, and the low-rank simplicity bias of neural networks; (ii) show that cross-modal knowledge distillation into the modalities undergoing collapse from the ones that survive has an implicit effect of averting modality collapse through disentanglement and denoising, based on which we propose Explicit Basis Reallocation (EBR) for a more systematic disentanglement and denoising of multimodal embeddings; (iii) extensive empirical validation of our theoretical results on multiple standard multimodal benchmarks, with EBR achieving state-of-the-art (SOTA) results in the application of dealing with missing modalities at test time, one of the most challenging tests of an algorithm's robustness to modality collapse.

\section{Related Works}

\myparagraph{Modality Collapse}
One of the earliest reports of modality collapse was in multimodal generative models \cite{Shi2019Collapse, sutter2021collapse, Ma2020VAEMAD}, for which \citet{nazabal2020collapse} hypothesized the phenomenon to be a result of disparities between gradients, which was also later confirmed by \citet{javaloy2022modalitycollapse}. Parallely, \citet{Wang2020Collapse} showed modality collapse for multimodal classification problems, where they found unimodal models to often outperform multimodal ones. They conjectured that (i) increased capacity of multimodal models leads to overfitting -- supported later in \citet{wu2024muse, Zhou2023Collapse}; and (ii) different modalities generalize at different rates -- which also aligns with the findings by \citet{nazabal2020collapse, javaloy2022modalitycollapse}.
Going beyond the hypotheses and conjectures, we aim to develop a rigorous theoretical understanding of modality collapse from the perspective of polysemanticity \cite{Scherlis2022PolysemanticityAC,huben2024sparseae, lecomte2024causespolysem} and low-rank simplicity bias \cite{huh2023simplicitybias}. These theoretical tools have also so far been restricted primarily to unimodal cases, and to the best of our knowledge, we are the first to explore them for explaining failure modes in multimodal learning. Additionally, since our proposed remedies to modality collapse can be leveraged to deal with missing modalities at test time, we provide an extended literature review on this area in \cref{sec:ext_related_works}.

\section{Collapse Mechanisms and Remedies}
\label{sec:theory}

\myparagraph{Task Setup}
We study the properties of representations of multimodal data learned by deep modality fusion algorithms. Specifically, based on recent literature \cite{wu2024muse, zhang2022m3care, ma2022mt, ma2021SMIL}, we follow the generic setup where samples from a multimodal distribution $X = \{X_1, X_2, ..., X_m\}, Y$ with $m$ modalities and labels $Y$, first undergo an independent modality-wise encoding through a set of learnable functions $f_1, f_2, ..., f_m$, followed by a learnable modality fusion operator $\varphi: \mathcal{R}^N \to \mathcal{R}^M$, where $N = \dim(f_1) + \dim(f_2) + ... + \dim(f_m)$, and $M$ is any arbitrary integer.
Note that since $N$ is finite, $\varphi$ can be considered as a neural network of bounded width and arbitrary depth, as they are known to be universal approximators \cite{Kidger2020UATDeepNarrow}.
The output from $\varphi$ is then fed into a classifier head $g: \mathcal{R}^M \to [0, 1]^C$, where $C$ is the number of classes in $Y$, to produce the output label $\hat{\vectorname{y}}$.
Specifically, the neural representation of and the final prediction on a multimodal sample $\vectorname{x} = \{\vectorname{x}_1, \vectorname{x}_2, ..., \vectorname{x}_m\} \in X$ is obtained as follows:
\begin{equation*}
    \hat{\vectorname{y}} = g\left( \varphi \left( f_1(\vectorname{x}_1), f_2(\vectorname{x}_2), ..., f_m(\vectorname{x}_m) \right) \right),
\end{equation*}
where $\{\vectorname{x}_1, \vectorname{x}_2, ..., \vectorname{x}_m\}$ are the modality specific instantiations of the sample $\vectorname{x}$. All proofs are provided in \cref{sec:proofs}.

\subsection{Polysemanticity and Cross-Modal Entanglements}
\label{subsec:crossmodal_polysemantic_interference}

We begin by showing that as the number of modalities increase, the proportion of cross-modal polysemantic neurons, \ie, those that encode features from more than one modality (as opposed to monosemantic neurons which encode exactly one feature from one modality) also increases (\cref{lemma:polysem_col}). This makes it difficult for the fusion head to independently control the contribution from a given modality without potential destructive interference from others (\cref{thm:interference}). It is important to note that the results presented presuppose the occurrence of polysemanticity, \ie, the number of task-relevant features in $X$ is greater than the number of neurons in any layer, something that is most often known to hold in practice \cite{Scherlis2022PolysemanticityAC}.

\begin{lemma}[Cross-Modal Polysemantic Collision]
    As the number of modalities increase, the fraction of polysemantic neurons encoding features from different modalities, for a given depth and width, increases quadratically in the number of modalities as follows:
    \begin{equation*}
        p(\vectorname{w}_p) \geq m(m-1) \frac{(\dim f_\text{min})^2}{ \left( \displaystyle \sum_{i=1}^m \dim f_i \right)^2},
    \end{equation*}
    where $p(\vectorname{w}_p)$ is the probability of a neuron being polysemantic via superposition, and $f_\text{min}$ is the modality-specific encoder with the smallest output dimensionality.
    \label{lemma:polysem_col}
\end{lemma}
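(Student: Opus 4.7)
The plan is a straightforward counting argument: model the pair of features entangled in a polysemantic neuron as a uniform random draw from the combined multimodal feature pool, compute the probability that the two features are drawn from distinct modalities, and finally lower bound the resulting sum by its worst-case term.

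First I would identify the combined feature pool with the disjoint union of modality-specific feature sets, of sizes $\dim f_1, \dots, \dim f_m$, so that the total pool has cardinality $N = \sum_{i=1}^m \dim f_i$. Following the superposition framework of \citet{elhage2022superposition}, a polysemantic neuron $\vectorname{w}_p$ is taken to superpose two features from this pool; invoking the permutation symmetry of the capacity-allocation objective (no modality is privileged a priori by the optimizer), I would treat the identities of the two superposed features as an ordered pair sampled uniformly from $\{1, \dots, N\}^2$.

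Next I would compute the probability that these two features originate in different modalities. The number of ordered pairs whose modality indices differ is $\sum_{i \neq j} \dim f_i \cdot \dim f_j$, while the total number of ordered pairs is $N^2$, giving
\[
p(\vectorname{w}_p) \;=\; \frac{1}{N^2}\sum_{i \neq j} \dim f_i \cdot \dim f_j .
\]
Applying the pointwise inequality $\dim f_i \cdot \dim f_j \geq (\dim f_\text{min})^2$ for every $i \neq j$, together with the fact that there are exactly $m(m-1)$ ordered pairs $(i,j)$ with $i \neq j$, I obtain $\sum_{i\neq j}\dim f_i \cdot \dim f_j \geq m(m-1)(\dim f_\text{min})^2$, which is the claimed bound after division by $N^2 = \bigl(\sum_i \dim f_i\bigr)^2$.

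The main obstacle, as I see it, lies not in the combinatorics but in justifying the modelling step, i.e., that the two superposed features of a polysemantic neuron can legitimately be treated as uniform samples from the combined pool. I would address this by appealing to the permutation invariance of the feature-packing objective studied in the superposition literature, where, absent a priori modality weighting, the allocation of features to neurons is exchangeable over the feature set; this exchangeability suffices to validate the uniform-sampling worst-case bound $\dim f_\text{min}$. A secondary subtlety is handling polysemantic neurons that superpose more than two features: here I would note that the event of at least one cross-modal pair is monotone nondecreasing in the number of superposed features, so the two-feature estimate is the weakest case and continues to furnish a valid lower bound on $p(\vectorname{w}_p)$.
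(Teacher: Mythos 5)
Your proposal is correct and follows essentially the same route as the paper's proof: a counting argument that lower-bounds the number of cross-modal feature pairs by $m(m-1)(\dim f_\text{min})^2$ (the paper uses unordered pairs, $\binom{m}{2}$ over $\binom{F_\text{dim}}{2}$, where yours uses ordered pairs over $N^2$, but these yield the same bound) and divides by the total number of pairs from the ambient pool of size $\sum_i \dim f_i$. Your explicit discussion of the exchangeability assumption and of neurons superposing more than two features goes slightly beyond what the paper writes down, but does not change the argument.
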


Since we are interested in the fraction of neurons that are cross-modal polysemantic in the space of all polysemantic neurons, $p(\cdot)$ is defined over the space of polysemantic neurons only. Also, \cref{lemma:polysem_col} deals specifically with 2-semantic neurons, \ie, those that simultaneously encode 2 features, which is the most likely form of polysemanticity in the combinatorial space of cross-modal polysemanticities for any value of $m$.

\begin{definition}[Conjugate Features]
    A conjugate feature $\vectorname{z}$ is one that coexists, in a given modality, with another feature $\vectorname{z}^*$
    such that at least one of them has some predictive value, but they can semantically cancel each other out when considered in conjunction, \ie,
    \begin{equation*}
        I(\vectorname{z}; \vectorname{y}) + I(\vectorname{z}^*; \vectorname{y}) = 0; \; I(\vectorname{z} \vectorname{z}^*; \vectorname{y}) = 0
    \end{equation*}
    In other words, $\vectorname{z}$ and $\vectorname{z}^*$ noisily interfere with each other.
    \label{def:conjugate_features}
\end{definition}

\begin{theorem}[Interference]
    As the number of cross-modal polysemantic collisions increase, the fraction of predictive conjugate features contributing to the reduction of the task loss decreases, resulting in the following limit:
    \begin{equation*}
        \lim_{p(\vectorname{w}_p) \to 1} \sum_{\forall \vectorname{z}_y \in X} \frac{\partial}{ \partial \vectorname{w}_p} \mathcal{L} \left (\varphi(\vectorname{z}_y), \vectorname{y} \right) = 0,
    \end{equation*}
    where $\vectorname{z}_y$ denotes predictive conjugate features in $X$.
    \label{thm:interference}
\end{theorem}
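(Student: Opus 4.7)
The plan is to argue that in the regime $p(\vectorname{w}_p)\to 1$, each predictive conjugate feature $\vectorname{z}_y$ is almost surely forced to share its weight with a cross-modal counterpart feature $\vectorname{z}^*$, and the pair's contributions to $\partial \mathcal{L}/\partial \vectorname{w}_p$ cancel one another, so their sum over $X$ collapses to zero.

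First I would use \cref{lemma:polysem_col} to lower-bound the probability that any given fusion-head neuron carrying $\vectorname{z}_y$ is cross-modal polysemantic by a quantity tending to $1$. Consequently, in the limit, for every $\vectorname{z}_y \in X$ routed through $\vectorname{w}_p$, there almost surely exists a co-located $\vectorname{z}^*$ on the same neuron from a different modality. Invoking the definition of conjugate features, these two satisfy both $I(\vectorname{z}_y;\vectorname{y}) + I(\vectorname{z}^*;\vectorname{y}) = 0$ and $I(\vectorname{z}_y\vectorname{z}^*;\vectorname{y}) = 0$, so the joint signal passing through $\vectorname{w}_p$ carries no net predictive content about $\vectorname{y}$.

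Next I would expand the relevant derivative via the chain rule,
\begin{equation*}
    \frac{\partial \mathcal{L}}{\partial \vectorname{w}_p} = \frac{\partial \mathcal{L}}{\partial \varphi}\left(\frac{\partial \varphi}{\partial \vectorname{z}_y}\frac{\partial \vectorname{z}_y}{\partial \vectorname{w}_p} + \frac{\partial \varphi}{\partial \vectorname{z}^*}\frac{\partial \vectorname{z}^*}{\partial \vectorname{w}_p}\right),
\end{equation*}
and tie its expectation over $X$ to the informativeness of the joint signal $(\vectorname{z}_y,\vectorname{z}^*)$ about $\vectorname{y}$. Restricting to a KL-divergence-type classification loss so that $\mathbb{E}[\partial \mathcal{L}/\partial \vectorname{w}_p]$ admits a closed form via the posterior $p(\vectorname{y}\mid\vectorname{z}_y,\vectorname{z}^*)$, the identity $I(\vectorname{z}_y\vectorname{z}^*;\vectorname{y})=0$ collapses this posterior to the prior and eliminates the joint contribution, while $I(\vectorname{z}_y;\vectorname{y}) + I(\vectorname{z}^*;\vectorname{y}) = 0$ annihilates the two marginal terms.

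Summing over all predictive conjugate features $\vectorname{z}_y \in X$ and sending $p(\vectorname{w}_p)\to 1$ then yields the stated limit, since the probability that any $\vectorname{z}_y$ escapes a cross-modal polysemantic collision vanishes asymptotically. The main obstacle I anticipate is this middle step: rigorously converting the information-theoretic conjugacy identities into an exact gradient cancellation without hidden smoothness or independence assumptions. I would address it via a bounded-activation assumption on $\varphi$ together with dominated convergence, allowing the chain-rule factors to be exchanged with the expectation, after which the mutual-information conditions apply directly to the resulting posterior expressions and drive each expected summand to zero.
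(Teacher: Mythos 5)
Your route is genuinely different from the paper's, and it has a gap at exactly the step you flag. The paper does not argue through mutual information at all: it invokes \cref{lemma:feature_entanglement_prob} (not \cref{lemma:polysem_col}) to establish that the \emph{noisy} conjugate $\vectorname{z}_\epsilon$ is the feature that aligns with the polysemantic subspace $\vectorname{w}_p$ and therefore co-activates whenever it is present, while its predictive conjugate $\vectorname{z}_y$ projects onto $\vectorname{w}_p$ with the opposite sign; the cancellation is then a purely geometric one at the pre-activation level, $\vectorname{w}_p \cdot \vectorname{z}_y + \vectorname{w}_p \cdot \vectorname{z}_\epsilon = 0$, from which the vanishing marginal gradient is read off. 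Note also that by the paper's definition, the conjugate pair $(\vectorname{z}, \vectorname{z}^*)$ lives \emph{within a single modality}; the cross-modal entanglement is only what forces the noisy member of the pair to be encoded on the shared neuron in the first place. Your construction instead pairs $\vectorname{z}_y$ with a co-located feature from a \emph{different} modality and applies the conjugacy identities to that pair, which is not what the definition licenses.

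The substantive gap is the middle step: the information-theoretic identities do not yield gradient cancellation. The quantity $\sum_{\vectorname{z}_y} \partial \mathcal{L} / \partial \vectorname{w}_p$ is evaluated at the current (generally non-optimal) parameters, and a pair of features that is jointly uninformative about $\vectorname{y}$ (i.e., $I(\vectorname{z}\vectorname{z}^*;\vectorname{y}) = 0$) can still carry a large loss gradient --- for instance, precisely when the model is currently relying on those features and SGD is pushing it to stop. Collapsing the posterior $p(\vectorname{y}\mid \vectorname{z}_y, \vectorname{z}^*)$ to the prior only controls the gradient if you additionally assume the model already computes the Bayes posterior, which you do not (and cannot, since the theorem is about training dynamics). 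Moreover, taken literally with nonnegative mutual information, $I(\vectorname{z};\vectorname{y}) + I(\vectorname{z}^*;\vectorname{y}) = 0$ forces both terms to be zero individually, which would make ``predictive conjugate features'' vacuous; the paper implicitly reads conjugacy as signed, opposite contributions along $\vectorname{w}_p$, and that signed-activation reading is what its cancellation argument actually uses. Dominated convergence and bounded activations do not repair this, because the issue is not exchanging limits with expectations but that the expected gradient of a finite-sample loss is simply not a functional of these mutual informations. To close the argument you would need the alignment asymmetry supplied by \cref{lemma:feature_entanglement_prob}, i.e., that $\vectorname{z}_\epsilon$ is entangled with $\vectorname{w}_p$ and always fires, so that the effective input seen by $\vectorname{w}_p$ from the conjugate pair nets to zero.
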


The modality facing the above marginal decrease in contribution to the loss reduction across its feature space, is the one that gets eliminated as part of the collapse. Next, we show how this polysemantic interference is a consequence of the low rank simplicity bias in neural networks.

\subsection{Rank Bottleneck}
\label{subsec:rank_bottleneck}

We establish that with increasing number of iterations, gradient updates in SGD tend to get restricted to a low-rank manifold, the rank of which is proportional to the rank of the average gradient outer product or AGOP (\cref{lemma:gradrank}). Consequently, in \cref{thm:polysem_bottleneck}, we are able to derive an upper-bound of convergence for every weight subspace in a given layer, which gets tighter as the neurons in that layer get increasingly polysemantic (\cref{def:polysem_deg}). It thus follows that cross-modal polysemantic interference is a result of the low-rank simplicity bias.

\begin{lemma}[Gradient Rank]
    The rank of gradient updates across iterations of SGD at layer $l$ is a convergent sequence with the following limit:
    \begin{equation*}
        \lim_{n \to \infty} \rank{\nabla_l \mathcal{L}_n} \propto \operatorname{rank} \left(\sum_{\vectorname{x} \in X} \nabla \varphi_l(\vectorname{x}) \nabla \varphi_l(\vectorname{x})^T \right),
    \end{equation*}
    where $\varphi_l(\vectorname{x})$ and $\nabla_l \mathcal{L}_n$ are respectively the output and the gradient of the loss $\mathcal{L}$ at layer $l$ at the $n$-th iteration of SGD, and $X$ is the set of all inputs to layer $l$
    across the dataset.
    \label{lemma:gradrank}
\end{lemma}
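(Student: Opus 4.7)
The plan is to factor the layer-$l$ gradient via the chain rule, identify the subspace that constrains each per-sample update, and then use the convergence of SGD's iterate subspaces to connect the limiting rank to the AGOP. As a first step, for any sample $\vectorname{x}$ and iteration $n$, I would write
\begin{equation*}
\nabla_l \mathcal{L}_n(\vectorname{x}) = \bigl(\nabla_{\varphi_l(\vectorname{x})}\mathcal{L}_n\bigr)\,\nabla\varphi_l(\vectorname{x})^T,
\end{equation*}
exhibiting each per-sample gradient as an outer product between a downstream error vector and the layer-$l$ Jacobian $\nabla\varphi_l(\vectorname{x})$. Summing over $\vectorname{x}\in X$, the columns of $\nabla_l\mathcal{L}_n$ lie in $\mathrm{span}\{\nabla\varphi_l(\vectorname{x}) : \vectorname{x}\in X\}$, so $\rank{\nabla_l\mathcal{L}_n}$ is bounded above by the rank of the Gram-type matrix $\sum_{\vectorname{x}}\nabla\varphi_l(\vectorname{x})\nabla\varphi_l(\vectorname{x})^T$. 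This is the easy direction, and it already explains why AGOP appears on the right.

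Next, I would establish monotonicity and convergence of the sequence $\rank{\nabla_l\mathcal{L}_n}$ as $n\to\infty$. Since the AGOP range is invariant along the SGD trajectory at the population level, the cumulative column span of the gradient iterates is non-decreasing and bounded above by the ambient AGOP rank. An integer-valued monotone bounded sequence is eventually constant, which gives the existence of the limit. To lift this to a proportional match with $\rank{\sum_{\vectorname{x}}\nabla\varphi_l(\vectorname{x})\nabla\varphi_l(\vectorname{x})^T}$, I would invoke the low-rank simplicity bias of Huh et al.\ as a supplementary hypothesis: SGD preferentially parameterizes the weights so as to concentrate update energy on the smallest subspace consistent with continued loss reduction, and this subspace is precisely the one selected by the leading AGOP eigendirections. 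Coupled with a Krylov-style exhaustion argument showing that repeated SGD steps excite every direction in the AGOP range, this yields equality of ranks up to a multiplicative constant determined by the conditioning of the upstream errors.

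The main obstacle will be making the notion of "rank of gradient updates across iterations" precise and controlling the proportionality constant hidden in $\propto$. I would take $\rank{\nabla_l\mathcal{L}_n}$ to denote the rank of the running cumulative gradient matrix rather than the pointwise rank of a single minibatch gradient, since this is the interpretation under which monotonicity and convergence are cleanest and under which AGOP is the natural limiting object. For the constant, I would impose a mild non-degeneracy assumption on the distribution of per-sample upstream errors $\nabla_{\varphi_l(\vectorname{x})}\mathcal{L}_n$, namely that they are not confined to a strict subspace of the AGOP range; without this assumption only the one-sided upper bound survives, which is still sufficient for the downstream invocation in \cref{thm:polysem_bottleneck}. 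I expect the trickiest technical point to be justifying the exchange between the limit in $n$ and the sum over $X$ when the SGD dynamics do not admit a closed-form fixed point, and I would handle this by working with the span of iterates (a purely algebraic object) rather than with any analytic limit of the gradient matrices themselves.
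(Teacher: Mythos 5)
Your proposal reaches the same conclusion but by a genuinely different and considerably more explicit route than the paper. The paper's own proof of \cref{lemma:gradrank} is a two-step citation: (i) the pointwise rank of each layer's gradient \emph{decreases} monotonically across SGD iterations (Galanti et al.), so the integer-valued sequence $\rank{\nabla_l \mathcal{L}_n}$ is eventually constant; (ii) each layer converges to a quantity proportional to the AGOP (Radhakrishnan et al.), which identifies the limit. You instead (a) derive the AGOP upper bound concretely from the chain-rule outer-product structure of the per-sample gradient -- a step the paper never writes down and arguably the most rigorous part of either argument; (b) obtain convergence from a \emph{non-decreasing} cumulative column span bounded above, which forces you to reinterpret $\rank{\nabla_l \mathcal{L}_n}$ as the rank of the running cumulative gradient rather than the pointwise gradient at iteration $n$ -- the opposite monotonicity direction from the paper's, and a reading the statement does not obviously license; and (c) close the gap to proportionality via the simplicity bias plus a Krylov-style exhaustion claim, where the paper simply cites the AGOP convergence result directly. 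Your route buys a self-contained upper bound and an honest accounting of the non-degeneracy assumption needed for the lower bound (without which, as you note, only the one-sided inequality survives -- which is indeed all that \cref{thm:polysem_bottleneck} actually uses); the paper's route buys brevity and stays closer to the literal statement, but delegates both the upper bound and the meaning of $\propto$ entirely to the cited works. Neither argument pins down the proportionality constant, so your proposal is at least as complete as the original on that point.
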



\begin{theorem}[Polysemantic Bottleneck]
    Let $W$ be the weight matrix at a given layer of $\varphi$, and $\vectorname{w} \leq W$ be any subspace in $W$.
    When the reduction in conditional cross-entropy $H(\vectorname{x}; \vectorname{y} | \vectorname{z})$ provided (amount of unique label information held) by each feature is the same, \ie, $I(\vectorname{x}; \vectorname{y}| \vectorname{z}_1) = I(\vectorname{x}; \vectorname{y} | \vectorname{z}_2) = ... = I(\vectorname{x}; \vectorname{y} | \vectorname{z}_k)$,
    at any iteration $n$ of SGD, the norm of the difference between $\vectorname{w}$ and the average gradient outer product (AGOP) of the complete weight matrix $W$ is bounded as follows:
    \begin{equation*}
        \norm{\vectorname{w} - \sum_{x \in X} \nabla \varphi_W(x) \nabla \varphi_W(x)^T}  \leq \gamma(\vectorname{w})^{-1/n},
    \end{equation*}
    where $\gamma(\vectorname{w})$ is the degree of polysemanticity of $\vectorname{w}$.
    \label{thm:polysem_bottleneck}
\end{theorem}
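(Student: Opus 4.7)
The plan is to combine the gradient-rank result of \cref{lemma:gradrank} with a power-iteration-style spectral contraction argument, using the equal-information assumption to symmetrize per-feature gradient contributions and then reading off the claimed $\gamma(\vectorname{w})^{-1/n}$ rate.

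First, I would unroll $n$ steps of SGD: $W_n = W_0 - \eta \sum_{k=0}^{n-1} \nabla_W \mathcal{L}_k$. Any subspace $\vectorname{w} \leq W_n$ then decomposes orthogonally into a piece lying in the column span of the AGOP $M_W := \sum_{x \in X}\nabla \varphi_W(x)\nabla \varphi_W(x)^T$ and a piece orthogonal to it. By \cref{lemma:gradrank}, the asymptotic rank of the cumulative update is controlled by $\rank{M_W}$, so SGD acts as a projection onto the AGOP span in the large-$n$ limit, driving the orthogonal component of $\vectorname{w}$ to zero.

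Next, I would use the equal-information hypothesis $I(\vectorname{x};\vectorname{y}\mid \vectorname{z}_1)=\cdots=I(\vectorname{x};\vectorname{y}\mid \vectorname{z}_k)$ to symmetrize: when each feature contributes the same marginal label information, its expected per-feature gradient norm is the same. This makes the AGOP eigenvalue mass along any direction proportional to the number of distinct features superposed there---which is precisely what I expect the forthcoming definition of $\gamma(\vectorname{w})$ to quantify. The result is a clean spectral gap: the AGOP eigenvalue along the direction of a subspace with polysemanticity degree $\gamma(\vectorname{w})$ dominates monosemantic directions by a factor $\gamma(\vectorname{w})$.

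Third, I would invoke a standard power-iteration contraction. When an iterate is repeatedly pushed in the direction of the AGOP's top eigenvector, whose eigenvalue exceeds the rest by a factor $\gamma(\vectorname{w})$, the angle between the iterate and that eigendirection shrinks geometrically at rate $\gamma(\vectorname{w})^{-1}$ per step. Averaging this contraction over $n$ iterations and converting back to a norm bound via the triangle inequality against $M_W$ then yields the stated $\gamma(\vectorname{w})^{-1/n}$ rate. The hardest step will be this last one: extracting the exact functional form $\gamma(\vectorname{w})^{-1/n}$ rather than some other monotone function of $n$ and $\gamma$. SGD is not an exact power iteration on a fixed matrix---the gradient operator evolves with $W_n$---so one must argue, leveraging the convergence asserted in \cref{lemma:gradrank}, that late iterates behave as a bounded perturbation of a stationary AGOP operator so that classical spectral convergence rates carry through. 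A secondary technicality is making the symbol $\vectorname{w} \leq W$ sufficiently precise that both sides of the stated inequality live in a common normed space and the orthogonal decomposition above is rigorously defined.
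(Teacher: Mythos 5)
Your route is genuinely different from the paper's. The paper does not use a spectral or power-iteration argument at all: it chains together (i) the convergence of SGD weights to a quantity proportional to the AGOP \cite{Radhakrishnan2024AGOP}, (ii) the rank-compression result of \citet{galanti2024rankcompress}, which it extends to a bound of the form $K(1-2\mu\lambda)^{nl}$ on the normalized distance between per-modality weight subspaces and the target matrix (\cref{eqn:galanti_compress}), and (iii) an \emph{asserted} inequality $K(1-2\mu\lambda)^{nl}\leq\gamma(\vectorname{w})^{-1/n}$ for subspaces with $\gamma(\vectorname{w})>1$. The equal-information condition is used there to argue that all multimodal basins share the same empirical risk, so any difference in local steepness must come from the implicit rank-minimization term of \cref{thm:drd} --- not, as in your plan, to equalize per-feature gradient norms and convert feature multiplicity into AGOP eigenvalue mass. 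Your symmetrization step is arguably a more concrete way of connecting $\gamma(\vectorname{w})$ (as given in \cref{def:polysem_deg}) to the AGOP spectrum than anything the paper offers.

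There is, however, a genuine gap in your final step, and it is not just the technicality you flag. A power-iteration contraction at per-step rate $\gamma(\vectorname{w})^{-1}$ compounds to $\gamma(\vectorname{w})^{-n}$ after $n$ steps, a quantity that \emph{decays} in $n$, whereas the claimed bound $\gamma(\vectorname{w})^{-1/n}$ \emph{increases} toward $1$ as $n\to\infty$. ``Averaging the contraction over $n$ iterations'' does not convert the former into the latter; no standard operation turns a geometric rate $\rho^{n}$ into $\rho^{1/n}$. So even if the perturbation argument for a quasi-stationary AGOP operator were made rigorous, your mechanism would deliver a bound with the wrong $n$-dependence, and you would still owe a separate argument for the specific functional form in the statement. (To be fair, the paper's own proof shares this weakness: the link $K(1-2\mu\lambda)^{nl}\le\gamma(\vectorname{w})^{-1/n}$ is asserted rather than derived, and for large $n$ it holds only because the left side tends to $0$ while the right side tends to $1$.) To complete your route you would need either to show why the relevant distance at iteration $n$ obeys an $n$-th-root law rather than a geometric one, or to prove the stronger geometric bound directly and note that it implies the stated one.
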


\cref{thm:polysem_bottleneck} implies that since the AGOP is known to be the low-rank subspace that $W$ converges to under SGD \cite{Radhakrishnan2024AGOP}, the small distance (tighter bound) between the AGOP and subspaces $\vectorname{w}$ with higher $\gamma(\vectorname{w})$ implies that $W$ in fact converges to those subspaces $\vectorname{w}$ with higher degrees of polysemanticity $\gamma(\vectorname{w})$.
In other words, SGD is more likely to parameterize $W$ with the low-rank polysemantic neurons than with high-rank monosemantic ones.
The implication of this is that, if we specifically consider the noisy-predictive type of cross-modal polysemantic neurons, they will be the first to get eliminated among all cross-modal polysemantic neurons due to the low-rank simplicity bias, as they either do not contribute to loss reduction, or do so negatively.
Below we explore ways of breaking this implicit rank bottleneck to circumvent cross-modal polysemantic interference among noisy and predictive features.

\subsection{Knowledge Distillation Frees Up Rank Bottlenecks}

We propose a simple remedy to the cross-modal polysemantic interference that a multimodal fusion model might suffer from under the default training paradigm with SGD. Based on our result in \cref{thm:kd_freeup}, the solution is to replace the modality-specific encoder of the modality that gets eliminated under collapse, with one that is pretrained via cross-modal knowledge distillation. Specifically, knowledge distillation has to be performed from the modality that survives fusion, to the one that gets ignored under fusion. When more than one
modality survives, we experimentally find that distilling in a sequence starting from the weakest and finishing with the strongest provides the best results (\cref{sec:dist_seq}).

\begin{theorem}[Dynamic Convergence Bound]
        When the inputs to $\varphi$ are dynamic (for instance, when the unimodal representations are aligned via cross-modal knowledge distillation)
        under some distance metric $d$, then at any iteration $n$ of SGD, the norm of the difference between $\vectorname{w}$ and the AGOP of $W$ is bounded as follows for all modalities $i, j \in M$ and datapoints $\vectorname{x} \in X$:
    \begin{equation*}
        \lim_{d(\Tilde{\vectorname{x}}_i, \Tilde{\vectorname{x}}_j) \to \epsilon} \norm{\vectorname{w} - \sum_{\vectorname{x} \in X} \nabla \varphi_W(\vectorname{x}) \nabla \varphi_W(\vectorname{x})^T}  \leq \kappa^{-1/n},
    \end{equation*}
    where $\Tilde{\vectorname{x}}_i, \Tilde{\vectorname{x}}_j = f_i(\vectorname{x}_i), f_j(\vectorname{x}_j)$, $\kappa$ is a constant for a given depth proportional to the AGOP along the entire weight matrix $W$ at that depth, $\epsilon$ is the maximum permissible bound on the distance between any pair of modality-specific encodings, and both $W$ and $\vectorname{w}$ are functions of $\vectorname{x}_i$ and $\vectorname{x}_j$, as they result from backpropagation on their predictions on $X$.
    \label{thm:kd_freeup}
\end{theorem}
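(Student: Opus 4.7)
The plan is to lift \cref{thm:polysem_bottleneck} from the static-input regime to the dynamic-input regime induced by cross-modal knowledge distillation. The key observation is that the subspace-dependent exponent $\gamma(\vectorname{w})$ in \cref{thm:polysem_bottleneck} arises because, when the inputs to $\varphi$ are fixed, \cref{lemma:gradrank} forces the AGOP to concentrate on a specific low-rank direction that SGD then parameterizes via polysemantic neurons. If distillation makes the encoder outputs $\Tilde{\vectorname{x}}_i = f_i(\vectorname{x}_i)$ drift across iterations while respecting $d(\Tilde{\vectorname{x}}_i, \Tilde{\vectorname{x}}_j) \leq \epsilon$, then the effective distribution seen by $\varphi$ is no longer identical across iterations, and the accumulated AGOP spans a richer support than any single snapshot.

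First, I would reapply \cref{lemma:gradrank} under the assumption that the inputs to $\varphi$ at iteration $n$ are the time-varying encodings $f_i^{(n)}(\vectorname{x}_i)$ and $f_j^{(n)}(\vectorname{x}_j)$, and track how the accumulated AGOP
\begin{equation*}
    \sum_{n} \sum_{\vectorname{x} \in X} \nabla \varphi_W(\vectorname{x}^{(n)}) \nabla \varphi_W(\vectorname{x}^{(n)})^T
\end{equation*}
covers directions that no fixed distribution would. Because the encoders are being aligned but are not collapsed (their images remain $\epsilon$-separated yet mutually informative), the AGOP's rank can match that of the full weight matrix $W$ at the layer in question, rather than being pinned to the polysemantic sub-direction $\vectorname{w}$.

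Second, I would rerun the contraction argument underlying \cref{thm:polysem_bottleneck}. The per-iteration geometric shrinkage of $\norm{\vectorname{w} - \text{AGOP}}$ by a factor $\gamma(\vectorname{w})^{-1/n}$ was driven by how polysemantically $\vectorname{w}$ aligned with the low-rank AGOP direction. Once the AGOP covers the full weight space uniformly, every subspace $\vectorname{w} \leq W$ receives comparable gradient mass, and the shrinkage is controlled by a global constant $\kappa$ proportional to the full-$W$ AGOP at that depth rather than by $\gamma(\vectorname{w})$; iterating $n$ steps yields the claimed $\kappa^{-1/n}$ bound. I would then handle the limit $d(\Tilde{\vectorname{x}}_i, \Tilde{\vectorname{x}}_j) \to \epsilon$ explicitly: as the distillation tolerance is approached, the encoders become maximally informative about one another without degenerating, which is exactly the regime in which the rank-restoration argument holds tightest. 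The two endpoints must be treated with care, since $d \to 0$ would collapse the modalities and reintroduce a rank bottleneck, while $d \gg \epsilon$ would leave them unaligned; the permissible limit $\epsilon$ is the sweet spot where cross-modal structure is preserved and rank is maximized.

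The main obstacle will be rigorously linking the dynamic-input hypothesis to an increase in the effective AGOP rank. One needs to argue that cross-modal distillation, while reducing the inter-encoder distance, does not degenerate the joint distribution of $(\Tilde{\vectorname{x}}_i, \Tilde{\vectorname{x}}_j)$ so severely that the AGOP reverts to the low-rank static case. The right formalism is likely to treat the distillation trajectory as a controlled perturbation that injects diversity into $\varphi$'s inputs, bounded below by the identifiability-up-to-equivariance structure invoked elsewhere in the paper, so that $\kappa$ can be cleanly identified with the full-weight-matrix AGOP rather than with any local polysemantic sub-rank.
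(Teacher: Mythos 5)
Your proposal follows the same overall strategy as the paper's proof: both treat \cref{thm:kd_freeup} as a corollary of \cref{thm:polysem_bottleneck} in which the subspace-dependent exponent $\gamma(\vectorname{w})$ is replaced by a global constant $\kappa$ once the inputs to $\varphi$ become dynamic under distillation, and both wrap this in the same qualitative narrative (denoising of the student modality, relaxation of the implicit rank regularization, escape from the polysemantic saddle). Where you diverge is in how $\kappa$ is identified. The paper's formal step is to argue that, as the encodings approach the $\epsilon$-neighborhood, the number of features captured by any polysemantic subspace is capped by a constant $k_{\vectorname{w}}$, so that $\gamma(\vectorname{w}) = |\vectorname{w} \cap X|/\dim(\vectorname{w})$ literally degenerates to the constant $\kappa = k_{\vectorname{w}}/\dim(\vectorname{w})$, which is then substituted into the chain of inequalities carried over from the proof of \cref{thm:polysem_bottleneck}. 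You instead argue that the accumulated AGOP over the distillation trajectory spans a richer support --- up to the full rank of $W$ --- so that every subspace receives comparable gradient mass and the contraction constant becomes uniform. The two mechanisms are compatible (a capped per-subspace feature count and a spread-out AGOP are two descriptions of increased monosemanticity), but yours makes a strictly stronger and unneeded claim (full-rank AGOP), whereas the paper only needs the AGOP to move away from the polysemantic subspaces; relatedly, the intended reading of the bound is that it \emph{loosens} relative to $\gamma(\vectorname{w})^{-1/n}$, signalling that $W$ no longer converges onto $\vectorname{w}$, rather than that convergence continues at a different geometric rate. The ``main obstacle'' you flag --- rigorously connecting the dynamic-input hypothesis to the change in the AGOP --- is real, but it is precisely the step the paper also leaves as an assertion (the boundedness of $|\vectorname{w}\cap X|$ by $k_{\vectorname{w}}$ is stated, not derived), so your proposal is no less complete than the published argument on this point.
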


As per \cref{thm:kd_freeup}, as the representations from different modalities get closer to each other under the distance metric $d$, which is what effectively happens during cross-modal knowledge distillation, the proportion of monosemantic neurons in $W$ increases.
This results in the AGOP of $W$ diverging away from its polysemantic subspaces $\vectorname{w}$.
In other words, knowledge distillation
implicitly disentangles the cross-modal interferences by freeing the rank bottleneck and encouraging necessary monosemanticity, allowing for independent, modality-wise denoising of features along novel dimensions. The intuition behind this observation is graphically illustrated in \cref{fig:reallocation}.



\begin{figure*}[!ht]
    \centering
    \includegraphics[width=\textwidth]{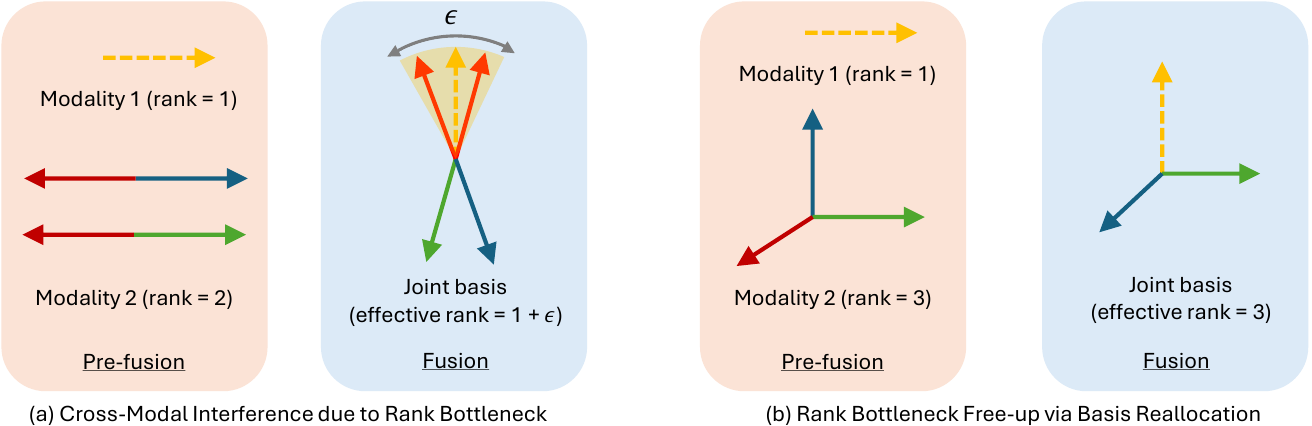}
    \caption{
    Illustration of modality collapse due to rank bottlenecks enforcing cross-modal polysemantic interference (a), and how freeing up such bottlenecks via basis reallocation can facilitate the elimination of noisy features (red) by encouraging monosemanticity (b).
    }
    \label{fig:reallocation}
\end{figure*}

\subsection{Explicit Basis Reallocation}
\label{sec:ebr}

Although knowledge distillation facilitates independent denoising of modality-specific representations in the fusion head by freeing up rank bottlenecks, the processes of disentanglement and denoising are implicit and hence, slow. We leverage our learnings about the disentanglement and denoising dynamics of knowledge distillation and use them as a set of inductive biases to design an algorithm for Explicit Basis Reallocation,
which addresses the problem in a significantly more controlled and efficient manner.

All modifications for EBR are restricted at the level of the unimodal encoders, and we do not alter the fusion operator in any way, which makes it agnostic of the choice of the fusion operator.
We introduce a simple encoder-decoder head $h_i\cdot h^{-1}_i$ on top of each modality specific encoder such that the unimodal encoding of each modality $i$ can be specified by the function $f_i = \Bar{f}_i \cdot h_i \cdot h^{-1}_i$.
For notational convenience, let $g_i = \Bar{f}_i \cdot h_i$.
We also introduce a modality-discriminator network $\psi$ that is trained on $g_i(\vectorname{x})$ to predict the modality labels. $h$, $h^{-1}$ and $\psi$ are simple two-layer MLPs, and hence add minimal parameter overhead. Jointly, we optimize the following two criteria:
\begin{equation*}
    \mathcal{L}_{\text{md}} = \sum_{i=1}^m \mathcal{L}_{\text{CE}}(\psi(g_i(\vectorname{x})), m); \; \mathcal{L}_{\text{sem}} = \mathcal{L}_{\text{CE}}(\hat{\vectorname{y}}, \vectorname{y}),
\end{equation*}
where $\mathcal{L}_\text{md}$ and $\mathcal{L}_{\text{sem}}$ respectively stand for the modality discrimination loss and the semantic loss (of the final multimodal prediction) respectively. The modality-specific parameter sets are updated as follows in each iteration of SGD:
\begin{align*}
    &\psi \leftarrow \psi - \nabla_\psi \mathcal{L}_\text{md} \\
    &g_i \leftarrow g_i - \nabla_{g_i} \mathcal{L}_{\text{sem}} + \nabla_{g_i} \mathcal{L}_{\text{md}} \\
    &h_i^{-1} \leftarrow h_i^{-1} - \nabla_{h_i^{-1}} \mathcal{L}_{\text{sem}}
\end{align*}
\myparagraph{Theoretical Rationale}
The maximization of $\mathcal{L}_{\text{md}}$ by $g_i$ brings all the modalities within the $\epsilon$-neighborhood under $d$ specified in \cref{thm:kd_freeup}, implementing an explicit disentanglement of noisy and predictive features.
The adversarial updates to $\psi$ and $g_i$ are continued until the final multimodal prediction loss $\mathcal{L}_{\text{CE}}(\hat{\vectorname{y}}, \vectorname{y})$ decreases, so as to retain the underlying causal factors that are identifiable \cite{gulrajani22identifiability}, alongside modality-specific, semantically relevant features \cite{chaudhuri2024nci} that arise out of equivariances shared by the underlying causal mechanisms \cite{ahuja2022properties}.
Projecting $g_i(\vectorname{x})$ back into the original dimensionality of $\Bar{f}_i$ via $h_i^{-1}$ leads to a denoised representation that utilizes the compete output basis of $\Bar{f}_i$ for representing the predictive features of modality $i$, resulting in increased monosemanticity.

%
%
%

\section{Experiments}
\label{sec:exp}

\begin{figure*}
    \centering
    \includegraphics[width=\textwidth]{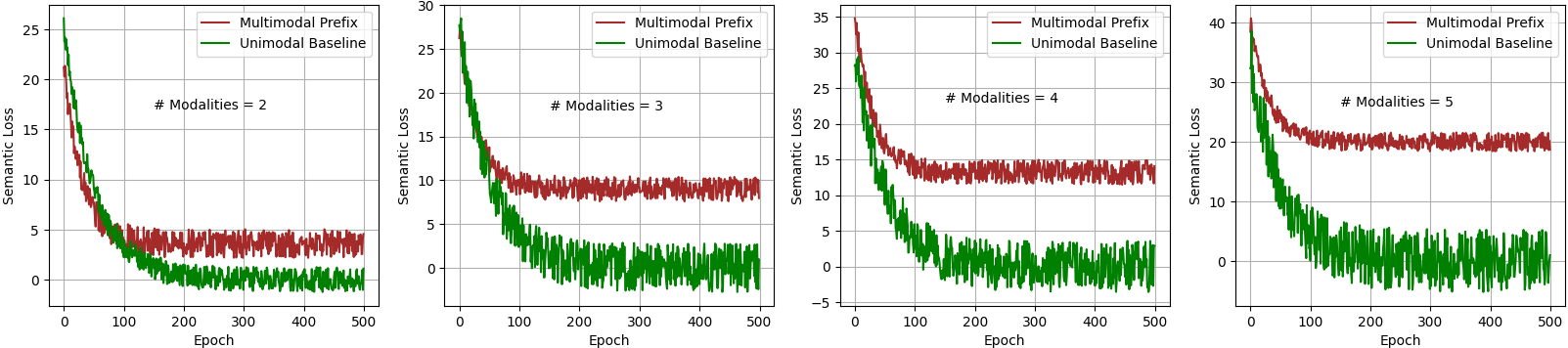}
    \caption{MIMIC-IV: Semantic loss curve during training with increasing number of modalities.
    The multimodal prefix is the semantic loss (linear) evaluation on the modality-specific encoder corresponding to the modality that gets eliminated during multimodal training. The unimodal baseline is the same encoder, but is additionally optimized to minimize its unimodal semantic loss.
    }
    \label{fig:uni_multi_noise}
\end{figure*}

\myparagraph{Datasets and Implementation Details} We choose the MIMIC-IV \cite{johnson2023MIMICIV} and avMNIST \cite{Vielzeuf2018avMNIST} datasets for our experiments.
For MIMIC-IV, we follow the same settings as \cite{wu2024muse} and that of \cite{wang2023shaspec,ma2021SMIL} for avMNIST. We use \citet{Tian2020ContrastiveRepDist} as our cross-modal knowledge distillation (KD) algorithm of choice applied on top of MUSE \cite{wu2024muse}, which also serves as our multimodal baseline for comparing EBR with SOTA. Due to space constraints, we report the results on MIMIC-IV in the main manuscript and defer those on avMNIST to \cref{sec:exp_avmnist}.

\begin{figure*}
    \centering
    \includegraphics[width=\textwidth]{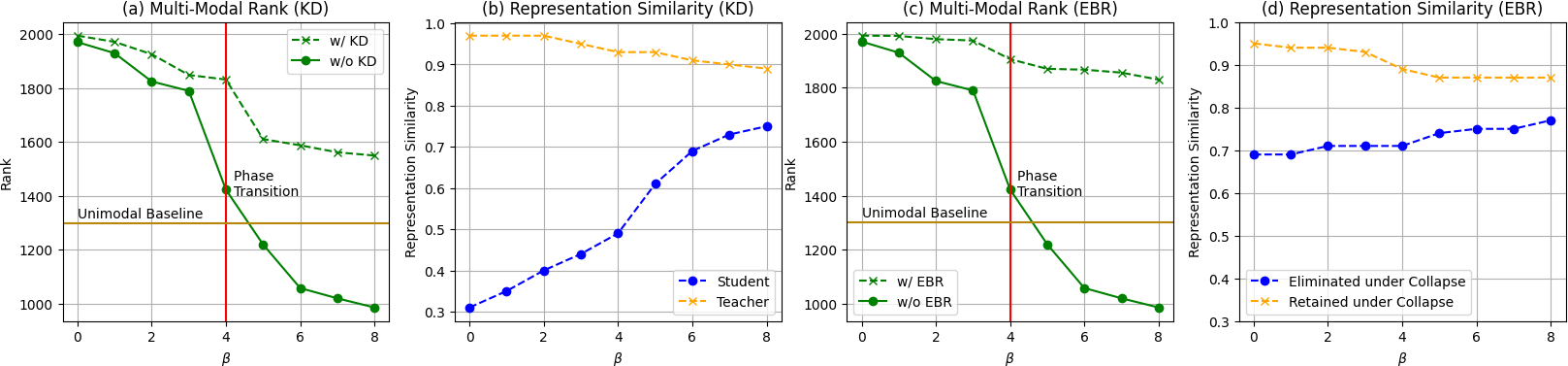}
    \caption{MIMIC-IV: Multimodal rank and representation similarities of modalities with the multimodal representation, under implicit (KD)
    and explicit (EBR) basis reallocation mechanisms, across different strengths $\beta$ of the modality that gets eliminated under collapse.}
    \label{fig:rank_repsim}
\end{figure*}


\subsection{Cross-Modal Polysemantic Interference}

\myparagraph{Objective and Settings}
We validate our theory on cross-modal polysemantic interference (\cref{subsec:crossmodal_polysemantic_interference}) by studying the impact of the unimodal encoder corresponding to the modality that gets eliminated under fusion, on the minimization of the semantic loss.
The results are depicted in \cref{fig:uni_multi_noise}.
The multimodal prefix is the modality-specific encoder of the modality that gets eliminated due to collapse. The red curve represents its semantic loss during multimodal training, computed via linear evaluation on its representation.
The unimodal baseline is the same encoder, but is additionally optimized to retain unimodal semantic classification performance.
Therefore, although both encoders have the same architecture and receive inputs from the same modality, the multimodal prefix only receives gradient updates through the fusion head, whereas the unimodal baseline also directly optimizes the semantic loss.

\myparagraph{Observations and Analyses}
As predicted by \cref{lemma:polysem_col}, as the number of modalities increase, the number of polysemantic features in the downstream fusion head also increases. Now, since polysemantic features bottleneck the fusion head (\cref{thm:polysem_bottleneck}) due to rank-constrained gradient updates (\cref{lemma:gradrank}), backpropagated gradients through the fusion head into the multimodal prefix also get rank-constrained, forcing it to allocate fractional capacities to features that would otherwise have been monosemantically represented. This makes the predictive features harder to decode, leading to the observed gap between the two curves.
Since the unimodal model also directly minimizes its own semantic loss, it has a much lower possibility of cross-modal interference, allowing it to successfully perform the necessary capacity allocations, leading to lower loss values.
As the number of modalities increase, the gap between the unimodal baseline and the multimodal prefix also increases, aligning with the conclusions of \cref{lemma:polysem_col} and \cref{thm:interference}.


\subsection{Presence of Rank Bottlenecks}
\label{sec:exp_rank_bottlenecks}


\myparagraph{Objective and Settings}
We empirically validate our theory
linking cross-modal polysemantic interference with the low-rank simplicity bias of neural networks (\cref{subsec:rank_bottleneck}) by looking at the relationship between the rank of the multimodal representation and the amount of upweighting ($\beta$)
needed to force the multimodal model to incorporate the modality that it would otherwise eliminate under collapse. The results are visualized in \cref{fig:rank_repsim} (a) and (c).
The default setting (w/o KD or w/o EBR) corresponds to the vanilla multimodal model, and the unimodal baseline refers to the rank of the representation learned by the unimodal encoder when trained in a standalone manner to minimize the semantic loss without any multimodal fusion.

\myparagraph{Observations and Analyses}
As the value of $\beta$ (the strength of the modality that gets eliminated by default)
is increased, the multimodal rank can be seen to decrease very fast in the default setting. It happens particularly rapidly around a critical point ($\beta=4$), exhibiting a form of phase transition wherein the rank drops to values lower than the unimodal baseline.
As the multimodal model is forced to incorporate more of the said modality, 
it is forced to select its (mostly noisy) features from the polysemantic subspaces that it has already learned (\cref{lemma:gradrank}). So, by the virtue of being represented polysemantically, the rank of this feature subspace ends up being much lower than it otherwise would (as depicted by the unimodal baseline).
However, this decay in rank is not observed as we free up rank bottlenecks through basis reallocation, via KD or EBR, implying that rank bottlenecks causing cross-modal polysemantic interference is precisely what is at the root of modality collapse.
The rank of the default multimodal representation being bounded above by that of the unimodal baseline beyond the phase transition around the critical point, is a consequence of the upper-bound presented in \cref{thm:polysem_bottleneck}.

\subsection{Effectiveness of Basis Reallocation}
\label{sec:effect_basis_realloc}

\begin{figure}[!t]
    \centering
    \includegraphics[width=\columnwidth]{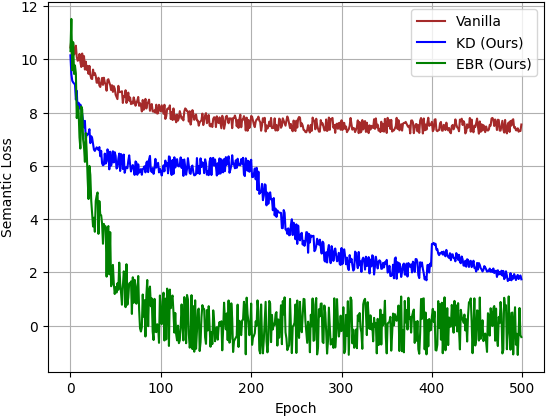}
    \caption{
    MIMIC-IV: Semantic loss minimization comparison between vanilla multimodal learning and using implicit (KD) and explicit (EBR) basis reallocation.
    }
    \label{fig:reallocation_training}
\end{figure}

\begin{figure}[!ht]
    \centering
    \includegraphics[width=\columnwidth]{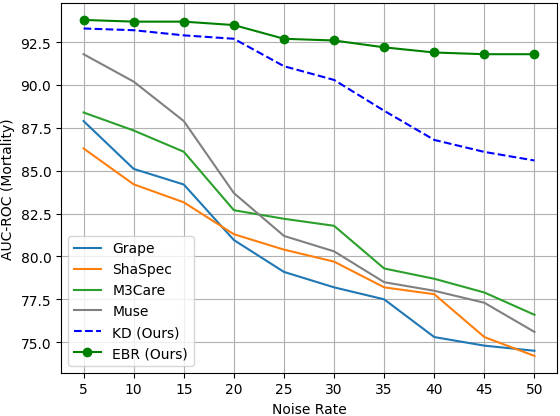}
    \caption{MIMIC-IV: With increasing noise rate, existing approaches suffer from modality collapse due to noisy cross-modal entanglements. With improved strategies of basis reallocation, implicit (KD) or explicit (EBR), robustness to noise and the consequent prevention of modality collapse can be ensured.}
    \label{fig:exp_denoising}
\end{figure}

Next, we test the effectiveness of basis reallocation (both implicit, via KD, and explicit, via EBR) as a mechanism for freeing up rank bottlenecks to break cross-modal polysemantic interferences, and eventually averting modality collapse. We report our results in \cref{fig:rank_repsim}, \cref{fig:reallocation_training,fig:exp_denoising}, and \cref{tab:kd_vanilla}, all of which unanimously and unambiguously show the effectiveness of basis reallocation towards preventing modality collapse, confirming the result in \cref{thm:kd_freeup}.

\myparagraph{Rank and Similarity with the Multimodal Representation}
\cref{fig:rank_repsim} (a) and (c) provide the most direct evidence that basis reallocation frees up rank bottlenecks, as multimodal representation in both KD and EBR consistently have a higher rank, while EBR provides a stronger buffer relative to KD against the rank decay occurring around the critical point. \cref{fig:rank_repsim} (b) and (d) show the representation similarities between the multimodal representation and strongest (teacher) and the weakest (student) modalities. Both the strongest and the weakest modality representations can be seen to more consistently align with the multimodal representation when using EBR, while KD requires some upweighting (by increasing the value of $\beta$) to achieve this alignment. In either case, they indicate that basis reallocation makes the multimodal representation use information from all the modalities, instead of collapsing onto just a subset (strongest) of the modalities.

\begin{table*}[!ht]
\centering
\resizebox{0.8\textwidth}{!}{
\begin{tabular}{lcccc}
    \toprule
    \multirow{2}{*}{\textbf{Method}} & \multicolumn{2}{c}{\textbf{Mortality}} & \multicolumn{2}{c}{\textbf{Readmission}} \\
    \cmidrule(r){2-3} \cmidrule(r){4-5}
    & \textbf{AUC-ROC} &  \textbf{AUC-PRC} &  \textbf{AUC-ROC} &  \textbf{AUC-PRC} \\
    \midrule
    CM-AE (ICML '11) & 0.7873 \textpm~0.40 & 0.3620 \textpm~0.22 & 0.6007 \textpm~0.31 & 0.3355 \textpm~0.25 \\
    SMIL (AAAI '21) & 0.7981 \textpm~0.11 & 0.3536 \textpm~0.12 & 0.6155 \textpm~0.09 & 0.3279 \textpm~0.15 \\
    MT (CVPR '22) & 0.8176 \textpm~0.10 & 0.3467 \textpm~0.06 & 0.6278 \textpm~0.09 & 0.2959 \textpm~0.05 \\
    Grape (NeurIPS '20) & 0.7657 \textpm~0.16 & 0.3733 \textpm~0.09 & 0.6335 \textpm~0.07 & 0.3120 \textpm~0.11 \\
    M3Care (SIGKDD '22) & 0.8265 \textpm~0.09 & 0.3830 \textpm~0.07 & 0.6020 \textpm~0.09 & 0.3870 \textpm~0.05 \\
    ShaSpec (CVPR '23) & 0.8100 \textpm~0.13 & 0.3630 \textpm~0.09 & 0.6216 \textpm~0.10 & 0.3549 \textpm~0.08 \\
    MUSE (ICLR'24) & 0.8236 \textpm~0.09 & 0.39.87 \textpm~0.05 & 0.6781 \textpm~0.05 & 0.4185 \textpm~0.07 \\
    \midrule
    \textbf{EBR (Ours)} & \textbf{0.8533 \textpm~0.09} & \textbf{0.4277 \textpm~0.02} & \textbf{0.7030 \textpm~0.05} & \textbf{0.4290 \textpm~0.02} \\
    \bottomrule
\end{tabular}
}
\caption{MIMIC-IV: Comparison of average performance with standard deviation across multiple modality missingness rates.
}
\label{tab:missing_modalities}
\end{table*}

\myparagraph{Optimization Dynamics}
In \cref{fig:reallocation_training}, we visualize the dynamics of optimizing the semantic loss with or without the implicit (KD) and explicit basis reallocation (EBR) strategies. Generally, we observe that using basis reallocation provides improved overall loss minimization as opposed to not using it (vanilla). Specifically,
when using implicit (KD), SGD tends to first learn noisy, polysemantic neurons before freeing up rank bottlenecks to denoise them. This leads to multiple step-like structures in the loss trajectory, which could correspond to the saddle geometry of such landscapes. Such geometries are possibly smoothed out into more convex neighborhoods under EBR, providing faster convergence and a more consistent optimization dynamic.


\myparagraph{Denoising Effect of Basis Reallocation}
\cref{thm:interference} posits that cross-modal polysemantic entanglements can be harmful precisely due to the possibility of interference from noisy features. We do a set of experiments where we corrupt the weakest modality during training with additive random uniform noise over a range of noise rates (from 5-50\%) and compare SOTA multimodal models with our proposed implicit (KD) and explicit basis reallocation (EBR) mechanisms. We report our findings in \cref{fig:exp_denoising}.
Unless explicitly taken care of, existing SOTA models perform notably poorly when the noise rate is increased. Since basis reallocation frees up rank bottlenecks, the novel dimensions can be utilized by SGD for denoising. EBR makes the denoising process explicit through the adversarial training of $\psi$ and $g_i$ to optimize $\mathcal{L}_{\text{md}}$, providing stronger robustness to noise.

\begin{table}[!ht]
\centering
\resizebox{\columnwidth}{!}{
\begin{tabular}{lcccc}
    \toprule
    \multirow{2}{*}{\textbf{Method}} & \multicolumn{2}{c}{\textbf{Mortality}} & \multicolumn{2}{c}{\textbf{Readmission}} \\
    \cmidrule(r){2-3} \cmidrule(r){4-5}
    & \textbf{AUC-ROC} &  \textbf{AUC-PRC} &  \textbf{AUC-ROC} &  \textbf{AUC-PRC} \\
    \midrule
    Grape (NeurIPS '20) & 0.8837 & 0.4584 & 0.7085 & 0.4551 \\
    \;\; + \underline{KD} & \underline{0.9011} & \underline{0.4620} & \underline{0.7231} & \underline{0.4610} \\
    \;\; + \textbf{EBR} & \textbf{0.9102} & \textbf{0.4799} & \textbf{0.7488} & \textbf{0.4691} \\
    M3Care (SIGKDD '22) & 0.8896 & 0.4603 & 0.7067 & 0.4532 \\
    \;\; + \underline{KD} & \underline{0.8950} & \underline{0.4700} & \underline{0.7080} & \underline{0.4562} \\
    \;\; + \textbf{EBR} & \textbf{0.8987} & \textbf{0.4850} & \textbf{0.7296} & \textbf{0.4832} \\
    MUSE (ICLR'24) & 0.9201 & 0.4883 & 0.7351 & 0.4985 \\
    \;\; + \underline{KD} & \underline{0.9350} & \underline{0.4993} & \underline{0.7402} & \underline{0.5066} \\
    \;\; + \textbf{EBR} & \textbf{0.9380} & \textbf{0.5001} & \textbf{0.7597} & \textbf{0.5138} \\
    \bottomrule
\end{tabular}
}
\caption{MIMIC-IV: Using knowledge-distilled / EBR backbones for the modality that would otherwise be eliminated by collapse.}
\label{tab:kd_vanilla}
\end{table}

\myparagraph{Independence from Fusion Strategies}
Finally, to show that basis reallocation can be performed agnostic of the fusion strategy, we replace the unimodal encoders of a number of SOTA multimodal models with their knowledge distilled / EBR counterparts and report their performance in \cref{tab:kd_vanilla}. Irrespective of the fusion strategy, it can be seen that an out-of-the-box improvement in test performance can be attained by these simple replacements, establishing the generic nature of our results.










\subsection{Fusion with Inference-Time Missing Modalities}

As discussed in \cref{sec:ebr}, after reallocating bases to features via EBR, since the latent factors of $X$ are identifiable up to the equivariances shared by the underlying mechanisms, we leverage this property to substitute missing modalities at test-time with those that are available. Concretely, once training with EBR converges, we proceed as follows: (1) Rank modalities \textit{wrt} their similarities (computed pairwise across all samples) with a reference modality (chosen as the strongest modality in our experiments) in terms of the latent encoding $g_i(\vectorname{x}_i)$; (2) When a modality $i$ of a test sample $\vectorname{x}$ goes missing, choose its substitution candidate as the modality $j$ that is closest to it in the ranked list; (3) Compute the proxy unimodal encoding of $x_i$ as $h^{-1}(g_j(\vectorname{x}_j))$. We validate the importance of ranking based on the EBR latents by benchmarking against other substitution strategies in \cref{sec:subst_baselines}.

To evaluate our approach, we adopt the experimental setup of MUSE \cite{wu2024muse}, following which we mask out the modalities in the MIMIC-IV dataset with probabilities $\{0.1, 0.2, 0.3, 0.4, 0.7\}$. We then take the average and standard deviation across these missingness rates and report the results in \cref{tab:missing_modalities}. It can be seen that close to $3\%$ improvements can be achieved in both Mortality and Readmission prediction AUC-ROC and AUC-PR metrics on top of SOTA, by simply replacing the unimodal encoders of the baseline MUSE with our proposed EBR variants and following the ranking and substitution strategy for dealing with missing modalities detailed above.






\section{Conclusion and Discussions}

We studied the phenomenon of modality collapse from the perspective of polysemanticity and low-rank simplicity bias. We established, both theoretically and empirically, that modality collapse happens due to low rank gradient updates forcing the fusion head neurons to polysemantically encode predictive features of one modality with noisy features from another, leading to the eventual collapse of the latter. This work attempts to reveal that multimodal learning may be plagued in ways that are rather unexpected, and consequently, unexplored, thereby leaving room for a number of improvements and future explorations.

For instance, \cref{thm:polysem_bottleneck,thm:kd_freeup} are valid when the reduction in conditional cross-entropy provided (amount of unique label information held) by each feature is the same. It remains to be explored how these results can be extended to the case when such reductions are different across features.
We conjecture that (and as also empirically evidenced) EBR turns the otherwise saddle landscape, that is obtained after the rank bottlenecks are freed up by knowledge distillation, into a convex one, enabling smoother and more predictable optimization. Developing an understanding of this could lead to deeper insights into the dynamics of the loss landscape geometry of modality collapse.

\section*{Acknowledgments}
We would like to thank the following individuals at Fujitsu Research of Europe for their independent inputs:
Mohammed Amer (for discussions on observations of modality collapse in multimodal genomics), Shamik Bose (for inputs on polysemanticity and capacity in neural networks), and Nuria García-Santa (for help with the MIMIC-IV dataset). We would also like to thank the anonymous reviewers for their thorough analysis and detailed feedback that helped clarify and improve various aspects of our work.

\section*{Impact Statement}
This paper advances the understanding of modality collapse in multimodal fusion, providing a theoretical foundation and experimental evidence to improve robustness in the presence of missing modalities. By systematically analyzing cross-modal interactions, we demonstrate that mitigating modality collapse enhances performance across diverse applications. In healthcare, our approach can be capable of reliable diagnosis even when hard / expensive to acquire modalities such as imaging or genomics might be missing. In autonomous perception, it could support safer decision-making despite sensor failures. By introducing a scalable and generalizable multimodal learning framework, this work lays the foundation for more robust and deployable AI systems in real-world settings, with the potential to positively impact society. To the best of our knowledge, we are not aware of any negative impacts of this work.


\bibliography{bibliography}

\begin{thebibliography}{51}
\providecommand{\natexlab}[1]{#1}
\providecommand{\url}[1]{\texttt{#1}}
\expandafter\ifx\csname urlstyle\endcsname\relax
  \providecommand{\doi}[1]{doi: #1}\else
  \providecommand{\doi}{doi: \begingroup \urlstyle{rm}\Url}\fi

\bibitem[Ahuja et~al.(2022)Ahuja, Hartford, and Bengio]{ahuja2022properties}
Ahuja, K., Hartford, J., and Bengio, Y.
\newblock Properties from mechanisms: an equivariance perspective on identifiable representation learning.
\newblock In \emph{ICLR}, 2022.

\bibitem[Arjovsky et~al.(2019)Arjovsky, Bottou, Gulrajani, and Lopez-Paz]{Arjovsky2019InvariantRM}
Arjovsky, M., Bottou, L., Gulrajani, I., and Lopez-Paz, D.
\newblock Invariant risk minimization.
\newblock \emph{ArXiv}, abs/1907.02893, 2019.

\bibitem[Baldi(2012)]{baldi2012ae}
Baldi, P.
\newblock Autoencoders, unsupervised learning, and deep architectures.
\newblock In \emph{Proceedings of ICML Workshop on Unsupervised and Transfer Learning}, 2012.

\bibitem[Bishop(1995)]{Bishop1995NoiseTikhonovReg}
Bishop, C.~M.
\newblock Training with noise is equivalent to tikhonov regularization.
\newblock \emph{Neural Computation}, 7\penalty0 (1):\penalty0 108--116, 1995.
\newblock \doi{10.1162/neco.1995.7.1.108}.

\bibitem[Chaudhuri et~al.(2022)Chaudhuri, Mancini, Chen, Akata, and Dutta]{Chaudhuri2022XModalViT}
Chaudhuri, A., Mancini, M., Chen, Y., Akata, Z., and Dutta, A.
\newblock Cross-modal fusion distillation for fine-grained sketch-based image retrieval.
\newblock In \emph{BMVC}, 2022.

\bibitem[Chaudhuri et~al.(2024)Chaudhuri, Georgescu, and Dutta]{chaudhuri2024nci}
Chaudhuri, A., Georgescu, S., and Dutta, A.
\newblock Learning conditional invariances through non-commutativity.
\newblock In \emph{ICLR}, 2024.

\bibitem[Chen \& Zhang(2020)Chen and Zhang]{Chen2020Missing}
Chen, J. and Zhang, A.
\newblock Hgmf: Heterogeneous graph-based fusion for multimodal data with incompleteness.
\newblock In \emph{ACM SIGKDD}, 2020.

\bibitem[De~Veaux \& Ungar(1994)De~Veaux and Ungar]{multicollinearity1994veaux}
De~Veaux, R.~D. and Ungar, L.~H.
\newblock Multicollinearity: A tale of two nonparametric regressions.
\newblock In \emph{Selecting Models from Data}, 1994.

\bibitem[Dou et~al.(2020)Dou, Liu, Heng, and Glocker]{Dou2020UnpairedMS}
Dou, Q., Liu, Q., Heng, P.-A., and Glocker, B.
\newblock Unpaired multi-modal segmentation via knowledge distillation.
\newblock \emph{IEEE Transactions on Medical Imaging}, 2020.

\bibitem[Elhage et~al.(2022)Elhage, Hume, Olsson, Schiefer, Henighan, Kravec, Hatfield-Dodds, Lasenby, Drain, Chen, Grosse, McCandlish, Kaplan, Amodei, Wattenberg, and Olah]{elhage2022superposition}
Elhage, N., Hume, T., Olsson, C., Schiefer, N., Henighan, T., Kravec, S., Hatfield-Dodds, Z., Lasenby, R., Drain, D., Chen, C., Grosse, R., McCandlish, S., Kaplan, J., Amodei, D., Wattenberg, M., and Olah, C.
\newblock Toy models of superposition.
\newblock \emph{Transformer Circuits Thread}, 2022.

\bibitem[Galanti et~al.(2024)Galanti, Siegel, Gupte, and Poggio]{galanti2024rankcompress}
Galanti, T., Siegel, Z.~S., Gupte, A., and Poggio, T.~A.
\newblock {SGD} and weight decay secretly minimize the rank of your neural network.
\newblock In \emph{NeurIPS 2024 Workshop on Mathematics of Modern Machine Learning}, 2024.

\bibitem[Goodfellow et~al.(2020)Goodfellow, Pouget-Abadie, Mirza, Xu, Warde-Farley, Ozair, Courville, and Bengio]{Goodfellow2020GAN}
Goodfellow, I., Pouget-Abadie, J., Mirza, M., Xu, B., Warde-Farley, D., Ozair, S., Courville, A., and Bengio, Y.
\newblock Generative adversarial networks.
\newblock \emph{Commun. ACM}, 2020.

\bibitem[Gulrajani \& Hashimoto(2022)Gulrajani and Hashimoto]{gulrajani22identifiability}
Gulrajani, I. and Hashimoto, T.
\newblock Identifiability conditions for domain adaptation.
\newblock In \emph{ICML}, 2022.

\bibitem[Huben et~al.(2024)Huben, Cunningham, Smith, Ewart, and Sharkey]{huben2024sparseae}
Huben, R., Cunningham, H., Smith, L.~R., Ewart, A., and Sharkey, L.
\newblock Sparse autoencoders find highly interpretable features in language models.
\newblock In \emph{ICLR}, 2024.

\bibitem[Huh et~al.(2023)Huh, Mobahi, Zhang, Cheung, Agrawal, and Isola]{huh2023simplicitybias}
Huh, M., Mobahi, H., Zhang, R., Cheung, B., Agrawal, P., and Isola, P.
\newblock The low-rank simplicity bias in deep networks.
\newblock \emph{Transactions on Machine Learning Research}, 2023.

\bibitem[Jackson et~al.(2018)Jackson, Souza, Flaks, Pan, Nicola, and Thite]{fsdd}
Jackson, Z., Souza, C., Flaks, J., Pan, Y., Nicola, H., and Thite, A.
\newblock Free spoken digit dataset (fsdd).
\newblock \url{https://github.com/Jakobovski/free-spoken-digit-dataset}, 2018.

\bibitem[Javaloy et~al.(2022)Javaloy, Meghdadi, and Valera]{javaloy2022modalitycollapse}
Javaloy, A., Meghdadi, M., and Valera, I.
\newblock Mitigating modality collapse in multimodal {VAE}s via impartial optimization.
\newblock In \emph{ICML}, 2022.

\bibitem[Johnson et~al.(2023)Johnson, Bulgarelli, Shen, Gayles, Shammout, Horng, Pollard, Moody, Gow, wei H.~Lehman, Celi, and Mark]{johnson2023MIMICIV}
Johnson, A. E.~W., Bulgarelli, L., Shen, L., Gayles, A., Shammout, A., Horng, S., Pollard, T.~J., Moody, B., Gow, B., wei H.~Lehman, L., Celi, L.~A., and Mark, R.~G.
\newblock Mimic-iv, a freely accessible electronic health record dataset.
\newblock \emph{Scientific Data}, 2023.

\bibitem[Kidger \& Lyons(2020)Kidger and Lyons]{Kidger2020UATDeepNarrow}
Kidger, P. and Lyons, T.
\newblock {Universal Approximation with Deep Narrow Networks}.
\newblock In \emph{COLT}, 2020.

\bibitem[Kim et~al.(2021)Kim, Son, and Kim]{kim2021missingmodality}
Kim, W., Son, B., and Kim, I.
\newblock Vilt: Vision-and-language transformer without convolution or region supervision.
\newblock In \emph{ICML}, 2021.

\bibitem[Lecomte et~al.(2024)Lecomte, Thaman, Schaeffer, Bashkansky, Chow, and Koyejo]{lecomte2024causespolysem}
Lecomte, V., Thaman, K., Schaeffer, R., Bashkansky, N., Chow, T., and Koyejo, S.
\newblock What causes polysemanticity? an alternative origin story of mixed selectivity from incidental causes, 2024.
\newblock URL \url{https://arxiv.org/abs/2312.03096}.

\bibitem[Lecun et~al.(1998)Lecun, Bottou, Bengio, and Haffner]{mnist1998lecun}
Lecun, Y., Bottou, L., Bengio, Y., and Haffner, P.
\newblock Gradient-based learning applied to document recognition.
\newblock \emph{Proceedings of the IEEE}, 1998.

\bibitem[Lee et~al.(2023)Lee, Lee, Hahn, Hyun, Choi, Ahn, and Lee]{lee2023missingmodality}
Lee, K., Lee, S., Hahn, S., Hyun, H., Choi, E., Ahn, B., and Lee, J.
\newblock Learning missing modal electronic health records with unified multi-modal data embedding and modality-aware attention.
\newblock In \emph{Machine Learning for Healthcare Conference}, 2023.

\bibitem[Ma et~al.(2020)Ma, Tschiatschek, Hern{\'a}ndez-Lobato, Turner, and Zhang]{Ma2020VAEMAD}
Ma, C., Tschiatschek, S., Hern{\'a}ndez-Lobato, J.~M., Turner, R.~E., and Zhang, C.
\newblock Vaem: a deep generative model for heterogeneous mixed type data.
\newblock In \emph{NeurIPS}, 2020.

\bibitem[Ma et~al.(2021)Ma, Ren, Zhao, Tulyakov, Wu, and Peng]{ma2021SMIL}
Ma, M., Ren, J., Zhao, L., Tulyakov, S., Wu, C., and Peng, X.
\newblock Smil: Multimodal learning with severely missing modality.
\newblock In \emph{AAAI}, 2021.

\bibitem[Ma et~al.(2022)Ma, Ren, Zhao, Testuggine, and Peng]{ma2022mt}
Ma, M., Ren, J., Zhao, L., Testuggine, D., and Peng, X.
\newblock Are multimodal transformers robust to missing modality?
\newblock In \emph{CVPR}, 2022.

\bibitem[Nagrani et~al.(2021)Nagrani, Yang, Arnab, Jansen, Schmid, and Sun]{Nagrani2021fusion}
Nagrani, A., Yang, S., Arnab, A., Jansen, A., Schmid, C., and Sun, C.
\newblock Attention bottlenecks for multimodal fusion.
\newblock In \emph{NeurIPS}, 2021.

\bibitem[Nazábal et~al.(2020)Nazábal, Olmos, Ghahramani, and Valera]{nazabal2020collapse}
Nazábal, A., Olmos, P.~M., Ghahramani, Z., and Valera, I.
\newblock Handling incomplete heterogeneous data using vaes.
\newblock \emph{Pattern Recognition}, 107, 2020.

\bibitem[Ngiam et~al.(2011)Ngiam, Khosla, Kim, Nam, Lee, and Ng]{ngiam2011cmae}
Ngiam, J., Khosla, A., Kim, M., Nam, J., Lee, H., and Ng, A.
\newblock Multimodal deep learning.
\newblock In \emph{ICML}, 2011.

\bibitem[Parascandolo et~al.(2018)Parascandolo, Kilbertus, Rojas-Carulla, and Sch{\"o}lkopf]{parascandolo18causalmechs}
Parascandolo, G., Kilbertus, N., Rojas-Carulla, M., and Sch{\"o}lkopf, B.
\newblock Learning independent causal mechanisms.
\newblock In \emph{ICML}, 2018.

\bibitem[Poklukar et~al.(2022)Poklukar, Vasco, Yin, Melo, Paiva, and Kragic]{poklukar2022geommcontrast}
Poklukar, P., Vasco, M., Yin, H., Melo, F.~S., Paiva, A., and Kragic, D.
\newblock Geometric multimodal contrastive representation learning.
\newblock In \emph{ICML}, 2022.

\bibitem[Radhakrishnan et~al.(2024)Radhakrishnan, Beaglehole, Pandit, and Belkin]{Radhakrishnan2024AGOP}
Radhakrishnan, A., Beaglehole, D., Pandit, P., and Belkin, M.
\newblock Mechanism for feature learning in neural networks and backpropagation-free machine learning models.
\newblock \emph{Science}, 2024.

\bibitem[Ramachandram \& Taylor(2017)Ramachandram and Taylor]{Ramachandram2017DeepML}
Ramachandram, D. and Taylor, G.~W.
\newblock Deep multimodal learning: A survey on recent advances and trends.
\newblock \emph{IEEE Signal Processing Magazine}, 2017.

\bibitem[Scherlis et~al.(2022)Scherlis, Sachan, Jermyn, Benton, and Shlegeris]{Scherlis2022PolysemanticityAC}
Scherlis, A., Sachan, K., Jermyn, A.~S., Benton, J., and Shlegeris, B.
\newblock Polysemanticity and capacity in neural networks.
\newblock \emph{ArXiv}, abs/2210.01892, 2022.

\bibitem[Shen \& Gao(2019)Shen and Gao]{Shen2019Missing}
Shen, Y. and Gao, M.
\newblock Brain tumor segmentation on mri with missing modalities.
\newblock In \emph{Information Processing in Medical Imaging}. Springer International Publishing, 2019.

\bibitem[Shi et~al.(2019)Shi, N, Paige, and Torr]{Shi2019Collapse}
Shi, Y., N, S., Paige, B., and Torr, P.
\newblock Variational mixture-of-experts autoencoders for multi-modal deep generative models.
\newblock In \emph{NeurIPS}, 2019.

\bibitem[Shi et~al.(2021)Shi, Paige, Torr, and N]{shi2021relating}
Shi, Y., Paige, B., Torr, P., and N, S.
\newblock Relating by contrasting: A data-efficient framework for multimodal generative models.
\newblock In \emph{ICLR}, 2021.

\bibitem[Sreelatha et~al.(2024)Sreelatha, Kappiyath, Chaudhuri, and Dutta]{sreelatha2024denetdm}
Sreelatha, S.~V., Kappiyath, A., Chaudhuri, A., and Dutta, A.
\newblock Denet{DM}: Debiasing by network depth modulation.
\newblock In \emph{NeurIPS}, 2024.

\bibitem[Sutter et~al.(2021)Sutter, Daunhawer, and Vogt]{sutter2021collapse}
Sutter, T.~M., Daunhawer, I., and Vogt, J.~E.
\newblock Generalized multimodal {ELBO}.
\newblock In \emph{ICLR}, 2021.

\bibitem[Tian et~al.(2020)Tian, Krishnan, and Isola]{Tian2020ContrastiveRepDist}
Tian, Y., Krishnan, D., and Isola, P.
\newblock Contrastive representation distillation.
\newblock In \emph{ICLR}, 2020.

\bibitem[Tsai et~al.(2019)Tsai, Bai, Liang, Kolter, Morency, and Salakhutdinov]{tsai2019transmissingmodality}
Tsai, Y.-H.~H., Bai, S., Liang, P.~P., Kolter, J.~Z., Morency, L.-P., and Salakhutdinov, R.
\newblock Multimodal transformer for unaligned multimodal language sequences.
\newblock In Korhonen, A., Traum, D., and M{\`a}rquez, L. (eds.), \emph{ACL}, 2019.

\bibitem[Vielzeuf et~al.(2018)Vielzeuf, Lechervy, Pateux, and Jurie]{Vielzeuf2018avMNIST}
Vielzeuf, V., Lechervy, A., Pateux, S., and Jurie, F.
\newblock Centralnet: a multilayer approach for multimodal fusion.
\newblock In \emph{ECCV Workshops}, 2018.

\bibitem[Wang et~al.(2023)Wang, Chen, Ma, Avery, Hull, and Carneiro]{wang2023shaspec}
Wang, H., Chen, Y., Ma, C., Avery, J., Hull, L., and Carneiro, G.
\newblock Multi-modal learning with missing modality via shared-specific feature modelling.
\newblock In \emph{CVPR}, 2023.

\bibitem[Wang et~al.(2020)Wang, Tran, and Feiszli]{Wang2020Collapse}
Wang, W., Tran, D., and Feiszli, M.
\newblock What makes training multi-modal classification networks hard?
\newblock In \emph{CVPR}, June 2020.

\bibitem[Wu et~al.(2024)Wu, Dadu, Tustison, Avants, Nalls, Sun, and Faghri]{wu2024muse}
Wu, Z., Dadu, A., Tustison, N., Avants, B., Nalls, M., Sun, J., and Faghri, F.
\newblock Multimodal patient representation learning with missing modalities and labels.
\newblock In \emph{ICLR}, 2024.

\bibitem[Xie et~al.(2019)Xie, Hovy, Luong, and Le]{Xie2019NoisyStudent}
Xie, Q., Hovy, E.~H., Luong, M.-T., and Le, Q.~V.
\newblock Self-training with noisy student improves imagenet classification.
\newblock In \emph{CVPR}, 2019.

\bibitem[Xue \& Marculescu(2023)Xue and Marculescu]{xue2022dynamicfusion}
Xue, Z. and Marculescu, R.
\newblock Dynamic multimodal fusion.
\newblock In \emph{Multi-Modal Learning and Applications Workshop (MULA), CVPR}, 2023.

\bibitem[You et~al.(2020)You, Ma, Ding, Kochenderfer, and Leskovec]{you2020grape}
You, J., Ma, X., Ding, Y., Kochenderfer, M.~J., and Leskovec, J.
\newblock Handling missing data with graph representation learning.
\newblock In \emph{NeurIPS}, 2020.

\bibitem[Zhang et~al.(2022)Zhang, Chu, Ma, Zhu, Wang, Wang, and Zhao]{zhang2022m3care}
Zhang, C., Chu, X., Ma, L., Zhu, Y., Wang, Y., Wang, J., and Zhao, J.
\newblock M3care: Learning with missing modalities in multimodal healthcare data.
\newblock In \emph{ACM SIGKDD}, 2022.

\bibitem[Zhao et~al.(2024)Zhao, Zhang, and Geng]{Zhao2024FusionSurvey}
Zhao, F., Zhang, C., and Geng, B.
\newblock Deep multimodal data fusion.
\newblock \emph{ACM Comput. Surv.}, 2024.

\bibitem[Zhou et~al.(2023)Zhou, Wang, Chen, Duan, and Zhu]{Zhou2023Collapse}
Zhou, Y., Wang, X., Chen, H., Duan, X., and Zhu, W.
\newblock Intra- and inter-modal curriculum for multimodal learning.
\newblock In \emph{ACM International Conference on Multimedia}, 2023.

\end{thebibliography}
\bibliographystyle{icml2025}

\newpage
\appendix
\onecolumn

\section{Extended Literature Review}
\label{sec:ext_related_works}

\myparagraph{Missing Modalities} Existing SOTA multimodal fusion approaches do not account for the possibility of missing modalities \cite{Ramachandram2017DeepML, Nagrani2021fusion, shi2021relating, Chaudhuri2022XModalViT, Zhao2024FusionSurvey}. Although this limitation was identified in works as early as \citet{ngiam2011cmae}, the pattern of missingness, \ie, which modality(ies) could go missing, were assumed to be known at training time.
Later, a number of graph-based techniques \cite{you2020grape, zhang2022m3care, wu2024muse}, including ones that use heterogeneous graphs to model different missingness patterns \cite{Chen2020Missing}, alongside transformer-based \cite{tsai2019transmissingmodality, ma2022mt}, and Bayesian meta-learning \cite{ma2021SMIL} based approaches attempted to operate without this assumption. Other approaches such as those that facilitate direct interaction among modality-specific raw inputs \cite{kim2021missingmodality, lee2023missingmodality}, and ones based on self-supervised domain adaptation \cite{Shen2019Missing}, also provided promising results.

Interestingly, it was observed by \citet{ma2022mt} that
multi-modal representations are strongly dependent on the fusion strategy, and that the optimal way of fusing modalities is dependent on the data, due to which, the authors recommended that fusion strategies should be distribution-specific.
However, the limitations introduced by such dependencies was also accounted for in related multimodal learning literature to address the challenge of resource-efficient utilization of modalities, which was tackled through dynamic data-dependent fusion \cite{xue2022dynamicfusion}.
Despite the existence of a number of bespoke techniques for dealing with missing modalities,
SOTA approaches such as ShaSpec \cite{wang2023shaspec}, in addition to such algorithms, were also benchmarked against baselines based on GANs \cite{Goodfellow2020GAN} and autoencoders \cite{baldi2012ae}, which the authors found to be similarly competitive.
Although there have been some works that explored the applicability of knowledge distillation to dealing with missing modalities, their purposes have been scoped to addressing issues such as compressing the extra parameter overhead due to multimodal fusion \cite{Dou2020UnpairedMS}, or dynamically weighting data points within modalities and contributions from loss terms \cite{Zhou2023Collapse}. In this work, we use knowledge distillation as a tool to theoretically study the fundamental processes in optimization that govern modality collapse, and show that it can avoid collapse by implicitly freeing up rank bottlenecks that lead to cross-modal entanglements between noisy and predictive features.

\section{Proofs}
\label{sec:proofs}

\textbf{\cref{lemma:polysem_col}} (Cross-Modal Polysemantic Collision).
    As the number of modalities increase, the fraction of polysemantic neurons encoding features from different modalities, for a given depth and width, increases quadratically in the number of modalities as follows:
    \begin{equation*}
        p(\vectorname{w}_p) \geq m(m-1) \frac{(\dim f_\text{min})^2}{ \left( \displaystyle \sum_{i=1}^m \dim f_i \right)^2},
    \end{equation*}
    where $p(\vectorname{w}_p)$ is the probability of a neuron being polysemantic via superposition, and $f_\text{min}$ is the modality-specific encoder with the smallest output dimensionality.

\begin{proof}
    The number of ways any two features can be selected by $\varphi$ from $X$ such that both belong to different modalities is $ \geq {m \choose 2} (\dim f_\text{min})^2$, since there are ${m \choose 2}$ ways of choosing modality-pairs, and there are $\geq (\dim f_\text{min})^2$ ways of choosing feature pairs in each such combination. Now, it is these pairs of features that lead to cross-modal polysemantic collisions (through neuron subspaces $\vectorname{w}_p$) during fusion in $\varphi$. Let the ambient dimension of the input to $\varphi$ be $F_{\text{dim}} = {\displaystyle \sum_{i=1}^m \dim f_i}$. Then, for a given depth and width, the probability that a polysemantic weight subspace would represent features from two different modalities would be:
    \begin{equation*}
        p(\vectorname{w}_p) \geq {m \choose 2} \frac{ (\dim f_\text{min})^2}{{F_{\text{dim}}\choose2}} = 
        m(m-1) \frac{(\dim f_\text{min})^2}{ \left( \displaystyle \sum_{i=1}^m \dim f_i \right)^2}
    \end{equation*}
    This completes the proof of the lemma.
\end{proof}

\begin{lemma}[Entanglement by Feature Type]
If the latent factors underlying the modalities are sufficiently complementary to each other in terms of predicitvity of the label $\vectorname{y}$, \ie, for any pair of modalities $i$ and $j$,
\begin{equation*}
    \sum_p \sum_q \vectorname{z}_i^p \cdot \vectorname{z}_j^q < K,
\end{equation*}
where $p$ and $q$ are indices over the latent factors of modalities $i$ and $j$ respectively, and $K$ is a constant, then, the noisy features from one modality are more likely to be entangled with predictive features of another through polysemantic weights in the fusion head, \ie, for any pair of noisy ($\vectorname{z}_\epsilon$) and predictive ($\vectorname{z}_y$) features from the same modality, the following will hold:
    \begin{equation*}
        \frac{\sum_{\vectorname{w}} \vectorname{z}_y \cdot \vectorname{w}}{\sum_{\vectorname{w}} \vectorname{z}_\epsilon \cdot \vectorname{w}} \leq 1,
    \end{equation*}
    where $\vectorname{w}$ denotes weight subspaces representing a different modality in $\varphi$.
    \label{lemma:feature_entanglement_prob}
\end{lemma}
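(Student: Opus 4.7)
The plan is to reduce the ratio bound to a direct consequence of the complementarity hypothesis, by first interpreting each cross-modal weight subspace $\vectorname{w}$ as being, at the relevant layer of $\varphi$, essentially spanned by the latent factors of the other modality it represents. Under the task setup and the identifiability-up-to-equivariance framework invoked in \cref{sec:ebr}, a weight subspace that ``represents'' modality $j$ admits a decomposition $\vectorname{w} = \sum_q \alpha_q^{\vectorname{w}} \vectorname{z}_j^q$, where the $\alpha_q^{\vectorname{w}}$ are the effective basis-allocation coefficients that SGD assigns while wiring modality $j$'s latent factors into the fusion head. I would state this as a short lemma-let, leaning on the fact that by \cref{lemma:polysem_col} the dominant mechanism by which features of one modality enter a weight of $\varphi$ aligned with another modality is precisely a polysemantic superposition over that modality's latents.

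Given this decomposition, I would rewrite the numerator as
\begin{equation*}
    \sum_{\vectorname{w}} \vectorname{z}_y \cdot \vectorname{w} \;=\; \sum_{\vectorname{w}} \sum_q \alpha_q^{\vectorname{w}} \left( \vectorname{z}_y^{(i)} \cdot \vectorname{z}_j^q \right),
\end{equation*}
and then invoke the complementarity hypothesis $\sum_p \sum_q \vectorname{z}_i^p \cdot \vectorname{z}_j^q < K$ to upper-bound the pairwise products between $\vectorname{z}_y^{(i)}$ (a predictive latent of modality $i$) and the predictive $\vectorname{z}_j^q$'s by a constant proportional to $K$; the noisy $\vectorname{z}_j^q$'s contribute little either because they are typically orthogonal to a signal direction or because their allocation weights $\alpha_q^{\vectorname{w}}$ are suppressed by SGD in the absence of label-relevant signal.

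For the denominator I would argue that no analogous constraint exists: a noisy feature $\vectorname{z}_\epsilon$ from modality $i$ is, by definition, not tied to the label-predictive structure of modality $j$, so the inner products $\vectorname{z}_\epsilon \cdot \vectorname{z}_j^q$ are free to realise values consistent with generic high-dimensional geometry. A standard anti-concentration / typical-case argument under mild genericity on the noise distribution then gives a floor on $\sum_{\vectorname{w}} \vectorname{z}_\epsilon \cdot \vectorname{w}$ that dominates the $K$-controlled numerator, yielding the claimed ratio bound. Dividing and simplifying completes the proof.

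The main obstacle, in my view, is not the algebra but the formalisation of ``$\vectorname{w}$ represents modality $j$'' strongly enough to justify the decomposition $\vectorname{w} = \sum_q \alpha_q^{\vectorname{w}} \vectorname{z}_j^q$ without circularity; this is where the identifiability assumption and the polysemantic-collision mechanism of \cref{lemma:polysem_col} must be combined carefully. A secondary technical point is ensuring that the denominator is bounded below in a distribution-independent manner — essentially, that noisy-feature alignments with cross-modal weights are typically nonvanishing — which I would address by adding an explicit genericity assumption on the noise rather than trying to derive it from the setup.
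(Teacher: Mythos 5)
Your overall strategy shares the paper's central intuition (the complementarity hypothesis caps how strongly a predictive feature of modality $i$ can align with a weight subspace representing modality $j$, while noisy features face no such cap), but the route is genuinely different and, as written, has a gap at the decisive step. The paper's proof does not decompose $\vectorname{w}$ into modality-$j$ latents at all. Instead it uses the superposition/Johnson--Lindenstrauss argument from \citet{elhage2022superposition} to make the \emph{same} constant $K$ play a dual role: because noisy features are close to random, they get entangled with any cross-modal weight direction that tolerates up to $(1-K)$ units of deviation from orthogonality, which yields the floor $\vectorname{z}_\epsilon \cdot \vectorname{w} \geq K$; the complementarity hypothesis simultaneously yields the cap $\vectorname{z}_y \cdot \vectorname{w} < K$. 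The termwise inequality $\vectorname{z}_y \cdot \vectorname{w} < K \leq \vectorname{z}_\epsilon \cdot \vectorname{w}$ is then what makes the ratio of the sums come out $\leq 1$ with no further constants to chase.

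Your version keeps the two bounds independent: the numerator is controlled by (something proportional to) $K$, and the denominator is lower-bounded by a ``generic'' anti-concentration floor that you introduce as a separate assumption. Nothing in your argument ties that floor to $K$, so the claimed conclusion $\leq 1$ does not follow --- you only get that the ratio is at most (constant $\cdot K$)/(generic floor), which could exceed $1$. This is the concrete missing idea: you need the near-randomness of $\vectorname{z}_\epsilon$ to force its alignment with cross-modal polysemantic weights \emph{above the same threshold $K$} that bounds the predictive alignments, which is exactly what the paper extracts from the superposition picture. A secondary issue is that your decomposition $\vectorname{w} = \sum_q \alpha_q^{\vectorname{w}} \vectorname{z}_j^q$ handles the noisy latents of modality $j$ by asserting their allocation weights are ``suppressed by SGD,'' which is in tension with the lemma's own conclusion (and with \cref{thm:interference}) that noisy features are precisely the ones that \emph{do} get wired into polysemantic weights; you would need to drop or substantially rework that step. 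Your closing worry about formalising ``$\vectorname{w}$ represents modality $j$'' is fair, but the paper sidesteps it entirely by never invoking such a decomposition.
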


\begin{proof}
    Since noisy features are closer to random, they can get entangled with neurons representing predictive features from any modality if the corresponding neuron allows features up to $(1 - K)$ units of deviations, according to the Johnson–Lindenstrauss lemma \cite{elhage2022superposition}, \ie, satisfying the following:
    \begin{equation*}
        \vectorname{z}_\epsilon \cdot \vectorname{w} \geq K; \; \vectorname{z}_y \cdot \vectorname{w} < K
    \end{equation*}
    It implies that the set of noisy features $\vectorname{z}_\epsilon$ would be more closely aligned with $\vectorname{w}$ than the set of predictive features $\vectorname{z}_y$, making the ratio of the sums over all $\vectorname{w} \in \varphi$, of the latter to the former, $\leq 1$.

    With the elimination of noisy features, such entanglements following from superposition become less likely among predictive features across modalities, since they would normally require dedicated dimensions with strong orthogonality, \ie, monosemantic neurons.
    This completes the proof of the lemma.
\end{proof}

\textbf{\cref{thm:interference}} (Interference).
    As the number of cross-modal polysemantic collisions increase, the fraction of predictive conjugate features contributing to the reduction of the task loss decreases, resulting in the following limit:
    \begin{equation*}
        \lim_{p(\vectorname{w}_p) \to 1} \sum_{\forall \vectorname{z}_y \in X} \frac{\partial}{ \partial \vectorname{w}_p} \mathcal{L} \left (\varphi(\vectorname{z}_y), \vectorname{y} \right) = 0,
    \end{equation*}
    where $\vectorname{z}_y$ denotes predictive conjugate features in $X$.

\begin{proof}
Let $\vectorname{z_\epsilon}$ be the noisy conjugate of $\vectorname{z}_y$.
As the number of polysemantic collisions increase, so does the proportion of polysemantic neurons, \ie, $\displaystyle \lim_{p(\vectorname{w}_p) \to 1}$. Now, from \cref{lemma:feature_entanglement_prob}, we know that as the noisy features are more likely to get into cross-modal polysemantic entanglements, which implies that they would exhibit a higher similarity (dot-product) with the polysemantic subspace $\vectorname{w}_p$. Additionally, since $\vectorname{z}_y$ and $\vectorname{z}_\epsilon$ are conjugate to each other, $\vectorname{z}_y$ would exhibit a low similarity (dot-product) with $\vectorname{w}_p$, activating in the opposite direction as that of $\vectorname{z}_\epsilon$. Again, from \cref{lemma:feature_entanglement_prob}, since $\vectorname{z}_\epsilon$ is entangled with $\vectorname{w}_p$, when present in the input, it would always activate. Based on this, the conjugate activation equation (without the non-linearity) of $\vectorname{z}_y$ can be written as:
    \begin{equation*}
        \lim_{p(\vectorname{w}_p) \to 1} \varphi(\vectorname{z_y}) = \varphi(\vectorname{z_y}) + \varphi(\vectorname{z_\epsilon}) = \underbrace{\vectorname{w}_p \cdot \vectorname{z}_y}_{\text{large -ve}} + \underbrace{\vectorname{w}_p \cdot \vectorname{z}_\epsilon}_{\text{large +ve}} = 0,
    \end{equation*}
meaning that the net activation of $\vectorname{z}_y$ along $\vectorname{w}_p$ is 0, which ultimately implies that:
    \begin{equation*}
        \lim_{p(\vectorname{w}_p) \to 1} \sum_{\forall \vectorname{z}_y \in X} \frac{\partial}{ \partial \vectorname{w}_p} \mathcal{L} \left (\varphi(\vectorname{z}_y), \vectorname{y} \right) = 0
    \end{equation*}
    This completes the proof of the theorem.
\end{proof}

\subsection{Rank Bottleneck}

\begin{definition}[Degree of Polysemanticity]
    We quantitatively define the degree of polysemanticity \cite{elhage2022superposition, Scherlis2022PolysemanticityAC}, $\gamma$, of a weight subspace, $\vectorname{w}$, as the ratio of the number of features in the input distribution $X$ encoded in the subspace and the number of dimensions of the subspace, \ie,
    \begin{equation*}
        \gamma(\vectorname{w}) = \frac{ |\vectorname{w} \cap X| }{\dim(\vectorname{w})},
    \end{equation*}
    where $|\vectorname{w} \cap X|$ is the number of features in $X$ that are encoded in $\vectorname{w}$.
    \label{def:polysem_deg}
\end{definition}

Polysemantic neurons lie on a low-rank manifold that the weights of each layer converge to under SGD. So, since all optimization happens along this low-rank polysemantic manifold, it is not possible for $\varphi$ to avert the cancellation effect among conjugate features, of which the noisy counterparts may get encoded as part of a polysemantic neuron.

\textbf{\cref{lemma:gradrank}} (Gradient Rank).
    The rank of gradient updates across iterations of SGD at layer $l$ is a convergent sequence with the following limit:
    \begin{equation*}
        \lim_{n \to \infty} \rank{\nabla_l \mathcal{L}_n} \propto \operatorname{rank} \left(\sum_{\vectorname{x} \in X} \nabla \varphi_l(\vectorname{x}) \nabla \varphi_l(\vectorname{x})^T \right),
    \end{equation*}
    where $\varphi_l(\vectorname{x})$ and $\nabla_l \mathcal{L}_n$ are respectively the output and the gradient of the loss $\mathcal{L}$ at layer $l$ at the $n$-th iteration of SGD, and $X$ is the set of all inputs to layer $l$
    across the dataset.

\begin{proof}
    Step 1: The rank of each layer decreases with every iteration of SGD \cite{galanti2024rankcompress}. Step 2: Every layer converges to a quantity proportional to the average gradient outer product \cite{Radhakrishnan2024AGOP}.
\end{proof}

\begin{theorem}[Depth-Rank Duality \cite{sreelatha2024denetdm}]
    \label{thm:drd}
    Let $\mathcal{A} = [ A_0, A_1, ..., A_n ]$ be the attribute subspace of $X$ with increasing ranks, \ie, $\operatorname{rank}(A_0) < \operatorname{rank}(A_1) < ... < \operatorname{rank}(A_n)$,
    such that every $A \in \mathcal{A}$ is maximally and equally informative of the label $Y$, \ie, $I(A_0, Y) = I(A_1, Y) = ... = I(A_n, Y)$.
    Then, across the depth of the encoder $\phi$, SGD yields a parameterization that optimizes the following objective:
    \begin{equation*}
    \underbrace{\min_{\phi, f} \operatorname{\mathcal{L}}(f(\phi(X)), Y)}_\text{ERM} + \min_{\phi} \sum_l \norm{ \phi[l](\Tilde{X}) - \Omega^d \odot \mathcal{A}}_2,
    \end{equation*}
    where 
    $\mathcal{L}(\cdot, \cdot)$ is the empirical risk, $f(\cdot)$ is a classifier head, $\phi[l](\cdot)$
    is the output of the encoder $\phi$ (optimized end-to-end) at depth $l$, 
    $\norm{\cdot}_2$ is the $l^2$-norm, $\odot$ is the element-wise product,
    $\Tilde{X}$ is the $l_2$-normalized version of $X$,
    $\Omega^d = [\mathbbm{1}_{\pi_1(l)}; \mathbbm{1}_{\pi_2(l)}; ...; \mathbbm{1}_{\pi_n(l)}]$,
    $\mathbbm{1}_\pi$ is a random binary function that outputs $1$ with a probability $\pi$, and $\pi_i(l)$ is the propagation probability of $A_i$ at depth $l$ bounded as:
    \begin{equation*}
        \pi_i(l) = \mathcal{O}\left( \rank{\phi[l]} \, r_i^{-d} \right),
    \end{equation*}
    where $\rank{\phi[l]}$ is the effective rank of the $\phi[l]$ representation space, and $r_i = \operatorname{rank}(A_i)$.
\end{theorem}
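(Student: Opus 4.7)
The plan is to decompose the objective into its two summands and argue for each separately. The first summand is the standard empirical risk, which arises directly from the choice of training criterion: SGD on $\mathcal{L}(f(\phi(X)), Y)$ explicitly minimizes this quantity, so this term is immediate. The substantive content is the second summand, which asserts that the representation at each depth $l$ converges in $\ell^2$ to a rank-filtered masking $\Omega^d \odot \mathcal{A}$ of the attribute subspace. My strategy is to show that SGD's low-rank implicit bias acts as a per-layer channel that stochastically transmits the components $A_i$ with a rank-dependent success probability, and that composing these channels over depth produces the claimed $r_i^{-d}$ scaling.

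I would proceed in three steps. First, invoke Lemma~\ref{lemma:gradrank} to establish that SGD drives each layer's weight matrix onto a low-rank subspace whose effective rank is governed by $\rank{\phi[l]}$; this acts as a capacity constraint on what can be transmitted through that layer. Second, model the passage of an attribute $A_i$ through layer $l$ as a Bernoulli event whose success probability is proportional to the fraction of $A_i$'s ambient rank fitting inside that low-rank manifold, giving $\mathcal{O}(\rank{\phi[l]} / r_i)$ per layer. The equal-informativeness hypothesis $I(A_0, Y) = \cdots = I(A_n, Y)$ is crucial here: it ensures the ERM gradient provides no preference among the $A_i$, so the only mechanism selecting which attributes survive is the low-rank bias. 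Third, compound this filter across the $d$ layers of $\phi$: since an attribute must survive every intermediate layer to reach depth $d$, and the per-layer events can be approximated as near-independent, the survival probability acquires the $r_i^{-d}$ scaling. The binary indicators $\mathbbm{1}_{\pi_i(l)}$ assembled into $\Omega^d$ are precisely these survival indicators, so $\phi[l](\tilde{X}) \approx \Omega^d \odot \mathcal{A}$, and the $\ell^2$ term penalizes residual deviation from this masked target.

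The main obstacle is the second step: formalizing the Bernoulli-channel intuition with a genuine concentration argument. One needs to argue that, with high probability over the SGD iterates, each attribute $A_i$ is either faithfully preserved in $\phi[l](\tilde{X})$ or essentially projected out, rather than preserved only fractionally—this dichotomy is what licenses writing the filter as a $0/1$ mask rather than a scalar reweighting. A closely related subtlety is establishing approximate independence of the per-layer filtering events, which is what turns the compositional calculation into a clean multiplicative $r_i^{-d}$ rather than a tangled correlated product; this would plausibly rest on the AGOP convergence invoked in Lemma~\ref{lemma:gradrank} together with the quasi-i.i.d.\ behaviour of minibatch gradients after a burn-in period. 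The $\ell^2$-normalization of $\tilde{X}$ is then a comparatively minor bookkeeping choice that keeps $\Omega^d \odot \mathcal{A}$ on the same scale as $\phi[l](\tilde{X})$, and the outer $\min_\phi$ is justified by observing that any parameterization achieving a smaller layerwise deviation would require a higher effective rank than SGD provides, contradicting the low-rank bias.
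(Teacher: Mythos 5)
The first thing to note is that the paper does not prove this statement at all: it is imported, with citation, from \citet{sreelatha2024denetdm} and is used here only as an external ingredient in the proof of the Polysemantic Bottleneck theorem. There is therefore no in-paper proof to compare your attempt against; what you have written is a reconstruction of an argument that would have to live in the cited work.

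Taken on its own terms, your sketch captures the intended intuition (the low-rank bias acting as a depth-compounded stochastic filter on attributes of different ranks), but it has gaps beyond the ones you flag yourself. First, the exponent bookkeeping does not come out right: compounding a per-layer survival probability of order $\rank{\phi[l]}/r_i$ over $d$ layers yields something of order $\bigl(\prod_l \rank{\phi[l]}\bigr) r_i^{-d}$, not the single-factor $\rank{\phi[l]}\, r_i^{-d}$ appearing in the statement; the two agree only under an additional assumption that the layerwise effective ranks are uniformly bounded, which you do not state. Second, and more fundamentally, the theorem asserts that SGD \emph{optimizes a specific penalized objective} --- an implicit-regularization claim. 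Even if your Bernoulli-channel and independence steps were made rigorous, they would at best characterize what the converged representation looks like (a masked copy of $\mathcal{A}$); they would not establish that the training trajectory is equivalent to jointly minimizing the ERM term and $\sum_l \norm{\phi[l](\Tilde{X}) - \Omega^d \odot \mathcal{A}}_2$, which requires relating the SGD update rule itself to the gradient of that penalty (e.g., through the AGOP characterization or an explicit implicit-bias equivalence). Finally, the two obstacles you name --- the $0/1$ dichotomy versus fractional preservation, and cross-layer independence of the filtering events --- are precisely where a rigorous proof would have to do its work, and \cref{lemma:gradrank} as stated only controls the limiting rank of gradient updates; it delivers neither property on its own.
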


\textbf{\cref{thm:polysem_bottleneck}} (Polysemantic Bottleneck).
    Let $W$ be the weight matrix at a given layer of $\varphi$, and $\vectorname{w} \leq W$ be any subspace in $W$.
    When the reduction in conditional cross-entropy $H(\vectorname{x}; \vectorname{y} | \vectorname{z})$ provided (amount of unique label information held) by each feature is the same, \ie, $I(\vectorname{x}; \vectorname{y}| \vectorname{z}_1) = I(\vectorname{x}; \vectorname{y} | \vectorname{z}_2) = ... = I(\vectorname{x}; \vectorname{y} | \vectorname{z}_k)$,
    at any iteration $n$ of SGD, the norm of the difference between $\vectorname{w}$ and the average gradient outer product (AGOP) of the complete weight matrix $W$ is bounded as follows:
    \begin{equation*}
        \norm{\vectorname{w} - \sum_{x \in X} \nabla \varphi_W(x) \nabla \varphi_W(x)^T}  \leq \gamma(\vectorname{w})^{-1/n},
    \end{equation*}
    where $\gamma(\vectorname{w})$ is the degree of polysemanticity of $\vectorname{w}$.

\begin{proof}
    We know that the weights of a neural network when optimized with SGD converge to a value proportional to the Averge Gradient Outer Product (AGOP), which essentially represents those sets of features which when minimally perturbed, produce a large change in the output \cite{Radhakrishnan2024AGOP}.

    Additionally, SGD with weight decay minimizes the ranks of the weight matrices \cite{galanti2024rankcompress} and that this minimization is more pronounced as we go deeper into the neural network \cite{huh2023simplicitybias}, as formalized in \cref{thm:drd} by \citet{sreelatha2024denetdm}.

    In other words, the deeper we go into a network, the more likely it is for the representations to be of a lower rank \cite{huh2023simplicitybias, sreelatha2024denetdm}, and that this rank decreases with each successive iteration \cite{galanti2024rankcompress}. In other words, for each layer, there is a subspace of a specific rank that is updated through backpropagation, and according to \cref{lemma:gradrank}, since the rank of such updates decrease with iterations, the lower the rank of this subspace, the greater its cumulative gradient across iterations, \ie, the more likely it is to be learned by gradient descent and the more likely it is that the weights of the particular layer would converge to this subspace.

    If two features equally minimize the empirical risk, and their joint encoding has no local improvement in the minimization of the marginal loss, extending the result by \citet{galanti2024rankcompress}, SGD on the fusion operator would prioritize the encoding of the modality with the lower rank of the two as follows:
    \begin{equation}
        \min_{\Bar{W}^{\varphi(l)}} \sum_{m \in M} l \norm{ \frac{W_m^{\varphi(l)}}{\norm{W_m^{\varphi(l)}}} - \Bar{W^{\varphi(l)}}} \leq K \cdot (1 - 2 \mu \lambda)^{nl},
        \label{eqn:galanti_compress}
    \end{equation}
    where $W_m^{\varphi(l)}$ is the subspace of the weight matrix at layer $l$ of the fusion operator $\varphi$ corresponding to the modality $m$, $\mu$ is the learning rate, $n$ is the SGD iteration, and
    $\Bar{W}^{\varphi(l)}$ is the target weight matrix that the fusion operator converges towards at layer $l$
    such that:
    \begin{equation*}
        \operatorname{rank}(\Bar{W}^{\varphi(d)}) = \sum_{m \in M_c} \operatorname{rank}(\Bar{W}^{f_m(o)}),
    \end{equation*}
    where $\Bar{W}^{f(o)}_m$ is the weight matrix at the output layer of the modality-specific encoder $f_m(o)$ of modality $m$ and $M_c$ is the set of modalities that survive collapse.
    Now, for polysemantic subspaces $\vectorname{w}$, since the degree of polysemanticity $\gamma(\vectorname{w}) > 1$, we can extend \cref{eqn:galanti_compress} as:
    \begin{equation}
        \min_{\Bar{W}^{\varphi(l)}} \sum_{m \in M} d \norm{ \frac{W_m^{\varphi(l)}}{\norm{W_m^{\varphi(l)}}} - \Bar{W^{\varphi(l)}}} \leq K \cdot (1 - 2 \mu \lambda)^{nl} \leq \gamma(\vectorname{w}) ^{-1/n}
        \label{eqn:polysem_deg_compress}
    \end{equation}
    
    According to the condition $I(\vectorname{x}; \vectorname{y}| \vectorname{z}_1) = I(\vectorname{x}; \vectorname{y} | \vectorname{z}_2) = ... = I(\vectorname{x}; \vectorname{y} | \vectorname{z}_k)$, since basins corresponding to multimodal combinations all lie at the same depth, their empirical risks are essentially the same, and so are the gradients from the ERM term. Now, as a result of modality collapse, we know that one of the basins is steeper than the rest, meaning it has a higher local gradient. Since the empirical risk is constant across all the basins / multimodal combinations, the steepness must come from the rank minimization term. Therefore, the combination with a steep entry must lead to a lower rank solution.
    
    As observed by \cite{javaloy2022modalitycollapse}, no local improvement in the minimization of the marginal loss may be due to conflicting gradients in the \emph{local} parameterizations for the two modalities. Note that this does not imply that the two modalities are globally conflicting. It is only the local encodings of the two that somehow conflict with each other. Specifically, following from \cref{eqn:galanti_compress,eqn:polysem_deg_compress} and \cref{lemma:gradrank}, the norm of the difference between the polysemantic subspace $\vectorname{w}$ and the AGOP of the ambient weight matrix $W$ can be bound as:
    \begin{equation}
        \norm{\vectorname{w} - \sum_{x \in X} \nabla \varphi_W(x) \nabla \varphi_W(x)^T} \leq K \cdot (1 - 2 \mu \lambda)^{nl}
        \label{eqn:polysem_agop_diff}
    \end{equation}

    Therefore, for polysemantic bases, at any given iteration of SGD, the difference with the AGOP can be more tightly bound than for monosemantic bases. Formally, combining \cref{eqn:galanti_compress,eqn:polysem_deg_compress,eqn:polysem_agop_diff}, we have:
    \begin{equation}
        \norm{\vectorname{w} - \sum_{x \in X} \nabla \varphi_W(x) \nabla \varphi_W(x)^T}  \leq    K \cdot (1 - 2 \mu \lambda)^{nl} \leq \gamma(\vectorname{w})^{-1/n}
        \label{eqn:polysem_bottleneck_deduction}
    \end{equation}
    
    In other words a basis formed with polysemantic neurons is more similar to the AGOP than one formed with monosemantic neurons, provided the conditional cross-entropy $H(\vectorname{x}; \vectorname{y} | \vectorname{z})$ reduction provided (amount of unique label information held) by each feature is the same, \ie, $I(\vectorname{x}; \vectorname{y}| \vectorname{z}_1) = I(\vectorname{x}; \vectorname{y} | \vectorname{z}_2) = ... = I(\vectorname{x}; \vectorname{y} | \vectorname{z}_k)$.
    
    This completes the proof of the theorem.
\end{proof}



\subsection{Knowledge Distillation Frees Up Rank Bottlenecks}

As described earlier, the cause for collapse is cross-modal interference between noisy and predictive features. Here, we find that knowledge distillation implicitly remedies this problem. Knowledge distillation converges when the noisy and the predictive subspaces
have been sufficiently disentangled to the point that the available rank can be assigned completely towards modeling the teacher modality, after effectively having discarded as many of the noisy features as possible. There is some empirical evidence on this from the self-distillation literature \cite{Xie2019NoisyStudent}. With the disentangled and denoised representations obtained from knowledge distillation, the causal factors of the previously eliminated modalities can expand (inverse of collapse) the multimodal representation space, utilizing previously unused dimensions for encoding features that effectively reduce the loss. Previously, the effect of the semantically relevant features from the eliminated modalities would not be observable since the superposition with the noisy features would cancel out (when marginalized across all features) any conditional reduction in loss that the causal factors would have induced.

\textbf{\cref{thm:kd_freeup}} (Dynamic Convergence Bound).
        When the inputs to $\varphi$ are dynamic (for instance, when the unimodal representations are aligned via cross-modal knowledge distillation)
        under some distance metric $d$, then at any iteration $n$ of SGD, the norm of the difference between $\vectorname{w}$ and the AGOP of $W$ is bounded as follows for all modalities $i, j \in M$ and datapoints $\vectorname{x} \in X$:
        \begin{equation*}
            \lim_{d(\Tilde{\vectorname{x}}_i, \Tilde{\vectorname{x}}_j) \to \epsilon} \norm{\vectorname{w} - \sum_{\vectorname{x} \in X} \nabla \varphi_W(\vectorname{x}) \nabla \varphi_W(\vectorname{x})^T}  \leq \kappa^{-1/n},
        \end{equation*}
        where $\Tilde{\vectorname{x}}_i, \Tilde{\vectorname{x}}_j = f_i(\vectorname{x}_i), f_j(\vectorname{x}_j)$, $\kappa$ is a constant for a given depth proportional to the AGOP along the entire weight matrix $W$ at that depth, $\epsilon$ is the maximum permissible bound on the distance between any pair of modality-specific encodings, and both $W$ and $\vectorname{w}$ are functions of $\vectorname{x}_i$ and $\vectorname{x}_j$, as they result from backpropagation on their predictions on $X$.

\begin{proof}

    A weighted general case of the Depth-Rank Duality result by \citep{sreelatha2024denetdm} follows as a consequence of cross-modal interferences, which can be written as:
    \begin{equation*}
        \underbrace{\min_{\varphi, f} \operatorname{\mathcal{L}}(\varphi(X), Y)}_\text{ERM} + \alpha \min_{\varphi} \sum_l \rank{\varphi},
    \end{equation*}
    where $\alpha$ is a function measuring the compactness of the embedding space, in our case, through the pairwise distance between points, $d(\Tilde{\vectorname{x}}_i, \Tilde{\vectorname{x}}_j)$, computed via a given iterate of $\varphi$, \ie,
    \begin{equation*}
        \alpha = h_\varphi(d(\Tilde{\vectorname{x}}_i, \Tilde{\vectorname{x}}_j))
    \end{equation*}
    When cross-modal polysemantic interference happens despite there being extra available dimensions in the ambient space, it indicates a faulty capacity allocation  \cite{Scherlis2022PolysemanticityAC} of the corresponding neurons by SGD, which follows from the Johnson–Lindenstrauss lemma \cite{elhage2022superposition}.
    It happens because SGD, by default, performs the aforementioned implicit weighted rank-regularization aside from ERM, with the highest possible value of $\alpha$ such that it encourages the smallest possible rank for a target ERM solution.
    However, if it is known a priori that the pairwise distances $d(\Tilde{\vectorname{x}}_i, \Tilde{\vectorname{x}}_j)$ approach some constant neighborhood $\epsilon$, we get:
    \begin{equation*}
        \alpha_1 > \alpha_2 > ... > \alpha_n = h_{\varphi_n}(\epsilon) = A,
    \end{equation*}
    where $A$ is a constant. Following from this, under the given limit, the rank regularization term in the optimization objective of Depth-Rank Duality gets relaxed as:
    \begin{equation*}
        \lim_{{d(\Tilde{\vectorname{x}}_i, \Tilde{\vectorname{x}}_j) \to \epsilon}} \alpha \min_{\varphi} \sum_l \rank{\varphi} = A \min_{\varphi} \sum_l \rank{\varphi},
    \end{equation*}
    which ultimately implies that for any $\Tilde{\varphi}$ inducing pairwise distances $d(\Tilde{\vectorname{x}}_i, \Tilde{\vectorname{x}}_j) > \epsilon$ will have $\rank{\Tilde{\varphi}} < \rank{\varphi}$.
    As representations from different modalities get closer to each other under the distance metric $d$, which is what effectively happens during cross-modal knowledge distillation, the increase in rank encourages a consequent increase in the proportion of monosemantic neurons in $W$.
    This results in the AGOP of $W$ diverging away from its polysemantic subspaces $\vectorname{w}_p$, with the size of such polysemantic subspaces decreasing as follows:
    \begin{equation*}
        \norm{\Tilde{\vectorname{w}}_p - \sum_{\vectorname{x} \in X} \nabla \Tilde{\varphi}_W(\vectorname{x}) \nabla \Tilde{\varphi}_W(\vectorname{x})^T} < \lim_{d(\Tilde{\vectorname{x}}_i, \Tilde{\vectorname{x}}_j) \to \epsilon} \norm{\vectorname{w}_p - \sum_{\vectorname{x} \in X} \nabla \varphi_W(\vectorname{x}) \nabla \varphi_W(\vectorname{x})^T},
    \end{equation*}

    Given the above constraint on the size of polysemantic bases in $W$
    as $d(\Tilde{\vectorname{x}}_i, \Tilde{\vectorname{x}}_j) \to \epsilon$, the size of any feature subspace $\vectorname{w}$, \ie, the number of features that can be encoded by any $\vectorname{w}$ approaches some constant upper bound corresponding to $\epsilon$:
    \begin{equation*}
        \lim_{{d(\Tilde{\vectorname{x}}_i, \Tilde{\vectorname{x}}_j) \to \epsilon}}  |\vectorname{w} \cap X| = |\vectorname{w}_\epsilon \cap X| = k_\vectorname{w},
    \end{equation*}
    where $|\vectorname{w}_\epsilon|$ is the size of the feature subspace in the neigborhood $\epsilon$. Since both $|\vectorname{w}_\epsilon|$ and $|X|$ are constants, $k_\vectorname{w}$ is also a constant.
    It thus follows that, under a dynamic input space approaching a bounded neighborhood $\epsilon$, the number of features encoded in any polysemantic subspace also gets bounded by $k_\vectorname{w}$ as $n \to \infty$. So, the RHS in \cref{thm:polysem_bottleneck} can be rewritten as:
    \begin{equation*}
        \lim_{n \to \infty} \gamma(\vectorname{w})^{-1/n} = \left( \frac{ |\vectorname{w} \cap X| }{\dim(\vectorname{w})} \right) ^{-1/n} = \left( \frac{ k_\vectorname{w} }{\dim(\vectorname{w})} \right) ^{-1/n} = \kappa^{-1/n},
    \end{equation*}
    where $\kappa = k_\vectorname{w} / \dim(\vectorname{w})$ is a constant as both $k_\vectorname{w}$ and $\dim(\vectorname{w})$ are constants. Finally, following from the above, when ${d(\Tilde{\vectorname{x}}_i, \Tilde{\vectorname{x}}_j) \to \epsilon}$, \cref{eqn:polysem_bottleneck_deduction} from the proof of \cref{thm:polysem_bottleneck} can be expressed in terms of $\kappa$ as:
    \begin{equation*}
        \lim_{{d(\Tilde{\vectorname{x}}_i, \Tilde{\vectorname{x}}_j) \to \epsilon}} \norm{\vectorname{w} - \sum_{x \in X} \nabla \varphi_W(x) \nabla \varphi_W(x)^T}  \leq    K \cdot (1 - 2 \mu \lambda)^{nl} \leq \gamma(\vectorname{w})^{-1/n} = \kappa^{-1/n}
    \end{equation*}
    This completes the proof of the theorem.
\end{proof}

\subsection{Additional Remarks}

\myparagraph{Unequal conditional cross-entropy across features} According to the condition 
$I(\vectorname{x}; \vectorname{y}| \vectorname{z}_1) = I(\vectorname{x}; \vectorname{y} | \vectorname{z}_2) = ... = I(\vectorname{x}; \vectorname{y} | \vectorname{z}_k)$, since basins corresponding to multimodal combinations all lie at the same depth, their empirical risks are essentially the same, and so are the gradients from the ERM term. Now, as a result of modality collapse, we know that one of the basins is steeper than the rest, meaning it has a higher local gradient. Since the empirical risk is constant across all the basins / multimodal combinations, the steepness must come from the rank minimization term in \cref{thm:drd}. Therefore, the combination with a steep entry must lead to a lower rank solution. When the equality is not met across all features, the low-rank / steepness condition is trivially satisfied by the existence of a lower-dimensional subspace of 
$\vectorname{z}_i$s that has a lower conditional mutual information $I(\vectorname{x};\vectorname{y}|\vectorname{z}_i)$, and deriving the upper-bound on the rank in terms of the AGOP is no-longer necessary. The rank of the subspace comprising features with lower relative mutual information could act as a reasonable estimate of the rank of the final weights that SGD would converge to. By considering the condition with the equality, we analyze the boundary case that even when such a subspace with low conditional mutual information cannot be identified, it is possible to upper-bound the rank of the weight matrix.

\myparagraph{Identifying Latent Factors and Substitutability}
If the latent factors are identifiable from the data up to some symmetry of the latent distribution \cite{gulrajani22identifiability}, then the substitutability result also holds up to the actions of that symmetry group. In other words, substitutability is directly contingent on identifiability, \ie, the existence of symmetries in the latent distribution can affect the substitutability of latent factors among modalities.

Segregating predictive and noisy features from the set of latent factors can be done by learning to discover independent causal mechanisms on the aggregate of all modalities \cite{parascandolo18causalmechs}. The degree to which the latent factor representations of the individual modalities can be compressed, \ie, the value of $\epsilon$ in \cref{thm:kd_freeup}, depends on the size / rank of the invariant \cite{Arjovsky2019InvariantRM} subspace.

\myparagraph{Intuition behind the Dynamic Convergence Bound}
Every modality consists of both noisy and predictive features. If fusion collapses to a specific modality (target), it means that the modality contains more predictive information and less noise than the rest. Knowledge distillation to align the representations of the other modalities with the target would thus denoise the other modalities, allocating a larger fraction of the feature space of the modality-specific encodings of such modalities to predictive features. Since noisy features are closer to random, they can get entangled with predictive features from any modality if the corresponding neuron has a slight deviation from perfect orthogonality, according to the Johnson–Lindenstrauss lemma \cite{elhage2022superposition}. With the elimination of noisy features, such entanglements following from superposition become less likely among predictive features across modalities, since they would normally require dedicated dimensions with strong orthogonality, \ie, monosemantic neurons. Therefore, as the unimodal representations get closer to each other through the implicit denoising mechanism of knowledge distillation, SGD gets increasingly compelled to parameterize the fusion head in a monosemantic manner.

In other words, knowledge distillation
implicitly disentangles the cross-modal interferences by freeing rank bottlenecks and encouraging necessary monosemanticity, allowing for independent, modality-wise denoising of features along novel dimensions.

When cross-modal polysemantic weight matrices are rank-bottlenecked, the only solution to minimize the loss further is to allocate the noisy features new, independent dimensions in the latent space. However, this causes an increase in representation rank. To counteract this, knowledge distillation frees up the rank bottlenecks by down-weighting the rank-regularization term.

\myparagraph{Distillation Denoising Conjecture}
Knowledge distillation reduces the weight on the rank-regularization term, freeing up other dimensions for exploration, potentially containing higher rank solutions with more modalities.
The rank de-regularization happens due to knowledge distillation having to denoise the student modality in order to align its representation with that of the teacher \cite{Bishop1995NoiseTikhonovReg}, as also indicated by our empirical observations in \cref{sec:effect_basis_realloc}. Given this, we conjecture that knowledge distillation allows the representation of the noisy components of the teacher modality as a transformed version of the student noise, thereby eliminating the need for encoding noisy features from every modality in the neurons encoding the student modality. This can be formally stated as follows:
\begin{equation*}
    \lim_{f_s(x_s) \to f_t(x_t)} f_t(x_t \hat{\eta}) = g(x_s \hat{\eta}); \;\; \forall x_s \in X_s, x_t \in X_t
\end{equation*}
where $X_s$ and $X_t$ are respectively the teacher and student modalities, $f_s$ and $f_t$ are respectively the teacher and student encoders, $\hat{\eta}$ represents the noisy components of a modality, and $g$ is the transformation function relating the student noise with the teacher noise embedding.

\myparagraph{Loss Landscape Geometry}
Once the optimizer converges to the modality-collapsed solution, the value of the loss and the rank of the solution balance each other out.
Now, to explore further, the optimizer has to move along a novel direction in the parameter space which leads to a reduction in loss but a simultaneous increase in rank. The reason behind this saddle-geometry is the presence of noisy features from one modality in entanglement with the predictive features from another, which results in an adversarial minimax game between the two. On either side of the saddle point along the unexplored dimensions are predictive features of the former modality which could potentially minimize the task loss, but is not taken into account due to rank regularization.
Since, according to \cref{thm:kd_freeup}, subspaces in $\varphi$ get decreasingly polysemantic when ${d(\Tilde{\vectorname{x}}_i, \Tilde{\vectorname{x}}_j) \to \epsilon}$, it implies that knowledge distillation down-weights this regularization by disentanglement and denoising of the bases of the modalities from which the noisy features originate.

\myparagraph{A Note on \cref{fig:interference}}
Under disentangled polysemanticity, although the noisy and predictive features may still be encoded via the same neuron, they map to different regions in the activation space of the neuron, leading to a feature-wise separable effect. For instance, the coefficients of the predictive features may have a relatively higher magnitude, implying that in order to activate the noisy component of the neuron, the degree of noise in the input would have to be much higher than its predictive counterpart.

\section{Additional Experimental Settings and Results}

\subsection{Experimental Settings}

\myparagraph{Dataset Details}
MIMIC-IV contains information about 180,000 patients across their 431,000 admissions to the ICU. Following \citet{wu2024muse}, we use the clinical notes, lab values, demographics (age, gender, and ethnicity), diagnosis, procedure, and medications as the set of input modalities. The task is to perform Mortality (whether the patient will pass away in the 90 days after discharge) and Readmission (whether a patient will be readmitted within the next 15 days following discharge) prediction for a given patient. avMNIST comprises 1500 samples of images and audio, taken from MNIST \cite{mnist1998lecun} and the Free Spoken Digits Dataset \cite{fsdd}, where the task is to predict the labels of the input digits from 0 to 9. We adopt the experimental setup of \citet{wang2023shaspec} for avMNIST.

\myparagraph{Implementation Details}
The two hidden layers of $\psi$ have output dimensionalities 512 and 256 respectively. The hidden layers of $h$ have output dimensionalities 1024 and 512 respectively, whereas that of $h^{-1}$ is 512 and 1024.
The model was trained for 1200 epochs, with an initial learning rate of 0.01, decayed at a rate of 0.9 every 100 epochs. We interleave between the optimization of $\mathcal{L}_\text{md}$ and $\mathcal{L}_\text{sem}$ every 10 epochs.

\subsection{Results on avMNIST}
\label{sec:exp_avmnist}

Following on from \cref{sec:exp}, we list the empirical results on avMNIST as follows:
\begin{itemize}
    \item Presence of rank bottlenecks: \cref{fig:rank_repsim_avmnist} (a) and (c)
    \item Effectiveness of basis reallocation:
    \begin{itemize}
        \item Rank and Similarity with the Multimodal Representation: \cref{fig:rank_repsim_avmnist}
        \item Optimization Dynamics: \cref{fig:training_avmnist}
        \item Denoising Effect of Basis Reallocation: \cref{fig:denoising_avmnist}
        \item Independence from Fusion Strategies: \cref{tab:kd_vanilla_avmnist}
    \end{itemize}
    \item Fusion with Inference-time Missing Modalities: \cref{tab:missing_modal_avmnist}
\end{itemize}
The analytical conclusions for avMNIST are the same as what is discussed in \cref{sec:exp}, since the patterns of observations are highly consistent between avMNIST and MIMIC-IV.
The only experiment not included for avMNIST is the one corresponding to \cref{fig:uni_multi_noise} for MIMIC-IV, since avMNIST has only two modalities, and hence, it is not possible to monitor the effect of increasing the number of modalities on the loss curves.

\begin{figure}[!ht]
    \centering
    \includegraphics[width=0.5\linewidth]{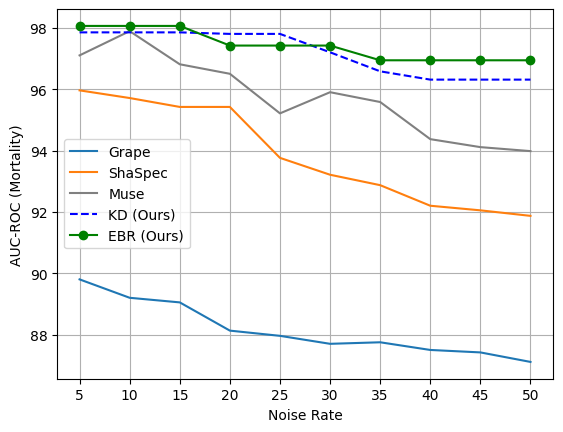}
    \caption{avMNIST: With increasing noise rate, existing approaches suffer from modality collapse due to noisy cross-modal entanglements. With improved strategies of basis reallocation, implicit (KD) or explicit (EBR), robustness to noise and the consequent prevention of modality collapse can be ensured.}
    \label{fig:denoising_avmnist}
\end{figure}

\begin{figure}[!ht]
    \centering
    \includegraphics[width=0.5\linewidth]{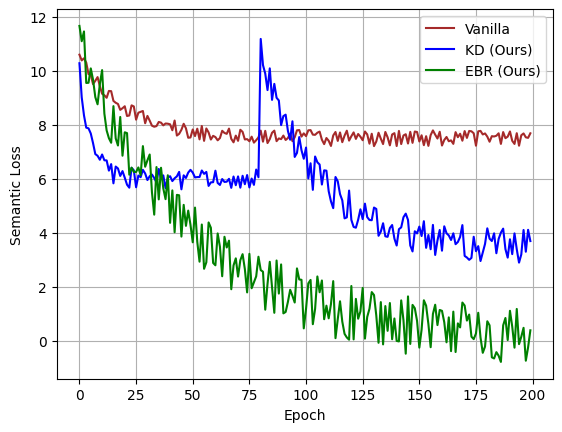}
    \caption{avMNIST: Semantic loss minimization comparison between vanilla multimodal learning and using implicit (KD) and explicit (EBR) basis reallocation.}
    \label{fig:training_avmnist}
\end{figure}

\begin{figure}
    \centering
    \includegraphics[width=\textwidth]{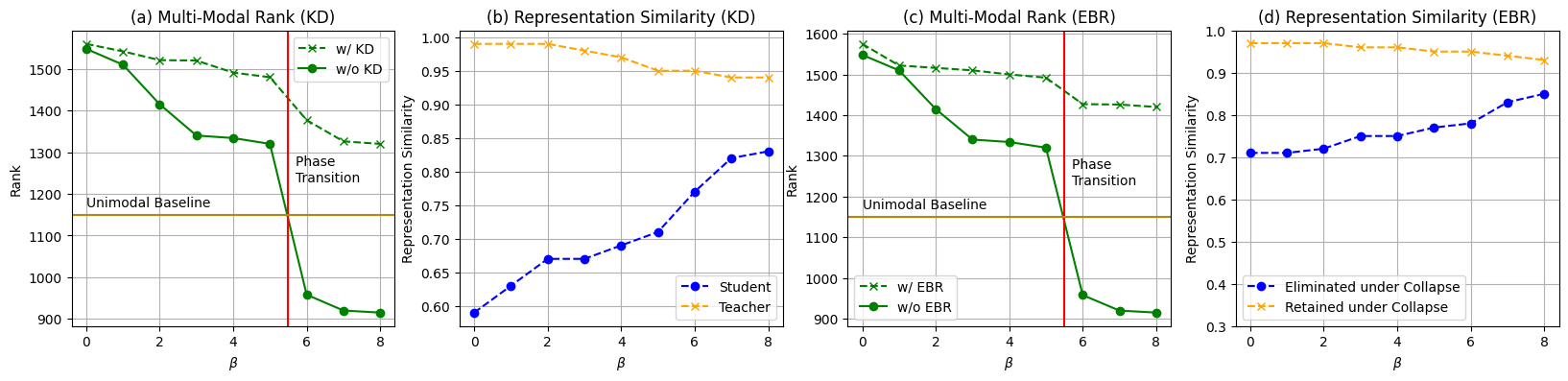}
    \caption{avMNIST: Multimodal rank and representation similarities of modalities with the multimodal representation, under implicit (KD)
    and explicit (EBR) reallocation mechanisms, across different strengths $\beta$ of the modality that gets eliminated under collapse.}
    \label{fig:rank_repsim_avmnist}
\end{figure}

\begin{table}[!ht]
\centering
\resizebox{0.5\columnwidth}{!}{
\begin{tabular}{lcccc}
    \toprule
    \multirow{2}{*}{\textbf{Method}} & \multicolumn{4}{c}{\textbf{Acc @ Audio Missingness Rate}} \\
    \cmidrule(r){2-5}
    & \textbf{95\%} & \textbf{90\%} & \textbf{85\%} & \textbf{80\%} \\
    \midrule
    Autoencoder (ICMLW'12) & 89.78 & 89.33 & 89.78 & 88.89 \\
    GAN (ACM Comm'20) & 89.11 & 89.78 & 91.11 & 93.11 \\
    Full2miss (IPMI'19) & 90.00 & 91.11 & 92.23 & 92.67 \\
    Grape (NeurIPS'20) & 89.65 & 90.42 & 91.15 & 91.37 \\
    SMIL (AAAI'21) & 92.89 & 93.11 & 93.33 & 94.44 \\
    ShaSpec (CVPR'23) & 93.33 & 93.56 & 93.78 & 94.67 \\
    MUSE (ICLR'24) & 94.21 & 94.36 & 94.82 & 94.93 \\
    \midrule
    \textbf{EBR (Ours)} & \textbf{95.30} & \textbf{95.57} & \textbf{95.89} & \textbf{95.93} \\
    \bottomrule
\end{tabular}
}
\caption{avMNIST: Comparison with SOTA on dealing with the missing audio modality across different missingness rates at test time, following the baseline setup of \citet{wang2023shaspec}.}
\label{tab:missing_modal_avmnist}
\end{table}

\begin{table}[!ht]
\centering
\resizebox{0.48\columnwidth}{!}{
\begin{tabular}{lcccc}
    \toprule
    \multirow{2}{*}{\textbf{Method}} & \multicolumn{4}{c}{\textbf{Acc @ Audio Missingness Rate}} \\
    \cmidrule(r){2-5}
    & \textbf{95\%} & \textbf{90\%} & \textbf{85\%} & \textbf{80\%} \\
    \midrule
    SMIL (AAAI'21) & 92.89 & 93.11 & 93.33 & 94.44 \\
    \;\; + \underline{KD} & \underline{92.95} & \underline{93.97} & \underline{94.06} & \underline{94.70} \\
    \;\; + \textbf{EBR} & \textbf{93.77} & \textbf{94.02} & \textbf{94.51} & \textbf{94.96} \\
    ShaSpec (CVPR'23) & 93.33 & 93.56 & 93.78 & 94.67 \\
    \;\; + \underline{KD} & \underline{95.02} & \underline{95.16} & \underline{95.30} & \underline{95.45} \\
    \;\; + \textbf{EBR} & \textbf{95.30} & \textbf{95.57} & \textbf{95.89} & \textbf{95.93} \\
    MUSE (ICLR'24) & 94.21 & 94.36 & 94.82 & 94.93 \\
    \;\; + \underline{KD} & \underline{94.98} & \underline{95.02} & \underline{95.40} & \underline{95.55} \\
    \;\; + \textbf{EBR} & \textbf{95.02} & \textbf{95.26} & \textbf{95.61} & \textbf{95.70} \\
    \bottomrule
\end{tabular}
}
\caption{avMNIST: Using knowledge-distilled / EBR backbones for the modality that would otherwise be eliminated.}
\label{tab:kd_vanilla_avmnist}
\end{table}

\subsection{Sequence of Distillation}
\label{sec:dist_seq}

The results of various strategies for sequencing the teacher modality for cross-modal knowledge distillation are reported in \cref{tab:kd_seq}. Based on these observations, we choose weakest-to-strongest as the sequence to benchmark our KD based implicit basis reallocation mechanism.

\begin{table}[!ht]
\centering
\resizebox{0.4\columnwidth}{!}{
\begin{tabular}{lcc}
    \toprule
    \textbf{Method} & \textbf{AUC-ROC} &  \textbf{AUC-PR} \\
    \midrule
    Strongest only & 0.9196 & 0.4875 \\
    Strongest-to-weakest & 0.8651 & 0.4130 \\
    Random & 0.9005 & 0.4685 \\
    Simultaneous & 0.9102 & 0.4786 \\
    \textbf{Weakest-to-strongest} & \textbf{0.9350} & \textbf{0.4993} \\
    \bottomrule
\end{tabular}
}
\caption{MIMIC-IV: AUC-ROC and AUC-PR on Mortality Prediction for various sequences of knowledge distillation.}
\label{tab:kd_seq}
\end{table}

\subsection{Baselines for Substitutability}
\label{sec:subst_baselines}

\begin{table}[!ht]
\centering
\resizebox{0.4\columnwidth}{!}{
\begin{tabular}{lcc}
    \toprule
    \textbf{Method} & \textbf{AUC-ROC} &  \textbf{Target} \\
    \midrule
    Zeros & 0.5110 \textpm~0.18 & \multirow{7}{*}{0.7844 \textpm~0.02} \\
    Random & 0.6139 \textpm~0.15 & \\
    Rep-NN & 0.7150 \textpm~0.08 & \\
    Late Fusion & 0.6990 \textpm~0.06 & \\
    Avg w/o cls & 0.5312 \textpm~0.23 & \\
    Avg w/ cls & 0.7396 \textpm~0.09 & \\
    \midrule
    \textbf{EBR (Ours)} & \textbf{0.7829 \textpm~0.05} & \\
    \bottomrule
\end{tabular}
}
\caption{MIMIC-IV: Comparison of the substitutability performance of EBR with baselines. Experiments performed over a subset of 4 (2 strong and 2 weak) input modalities, based on which, the average of 15 possible missingness patterns with standard deviation are reported.}
\label{tab:subst_baselines}
\end{table}

We design the following baselines for comparison against our EBR-based modality substitution approach: zeros, random sampling, nearest-neighbor modality, train set average. Using four (two weak and two strong - diagnosis, lab values, clinical notes, and medication respectively) modalities from MIMIC IV, we report the average with standard deviation of the 15 possible missingness patterns on the AUC-ROC metric for Mortality prediction in \cref{tab:subst_baselines}. The target represents the model trained on specifically on the subset of modalities that do not go missing, \ie, its performance would always be higher than any baseline since it does not have to deal with the distribution shift that comes from modalities going missing at test time.

\subsection{Multicollinearity}
We expect to see increased levels of multicollinearity as the number of modalities increase, if the dimensionality of the representation space remains constant. As correctly conjectured by the reviewer, we would expect multicollinearity to be more pronounced in the deeper layers of the fusion head. The reason behind this is that although there may be dependencies among features across modalities, they may not be exactly linear. As they propagate deeper into the fusion head, it is more likely that those non-linear dependencies would be resolved and linearized in the final representation space prior to classification. Theoretically, the bound in Thm 2 is derived based on the AGOP, i.e., ${\vectorname{x} \in X} \nabla \varphi_W(\vectorname{x}) \nabla \varphi_W(\vectorname{x})^T$
, being a low rank subspace in W (corresponding to an independent set of features), as discussed in \cite{Radhakrishnan2024AGOP}, which is also required since one-to-one dimension-to-feature mappings needed to detect the presence of multicollinearity may exist in neural networks \cite{multicollinearity1994veaux}. This aligns with the condition for regression multicollinearity that $X^TX$ should be not a full rank matrix.

To empirically confirm this, we calculate the variance inflation factor (VIF) with increasing modalities on our trained representation space. We report the average VIF across features in \cref{tab:multicollinearity}. With the increasing number of modalities, multicollinearity (VIF) increases in all cases. However, basis reallocation encourages cross-modal features to be encoded independently, with the explicit EBR being more efficient in controlling the level of multicollinearity relative to the implicit KD.

\begin{table}[!ht]
    \centering
    \begin{tabular}{lcccc}
        \toprule
        \# \textbf{Modalities} & \textbf{2} & \textbf{3} & \textbf{4} & \textbf{5} \\
        \midrule
        \textbf{Vanilla} & 1.15 & 2.68 & 3.51 & 4.70 \\
        \textbf{w/ KD} & 1.09 & 1.90 & 2.30 & 2.68 \\
        \textbf{w/ EBR} & 1.05 & 1.26 & 1.32 & 1.55 \\
        \bottomrule
    \end{tabular}
    \caption{Average variance inflation factor (VIF) across features in the representation space with increasing number of modalities.}
    \label{tab:multicollinearity}
\end{table}

\subsection{Statistical Comparisons}
In \cref{tab:stat_comps} we report the resulting p-values of performing the Wilcoxon rank test with Holm–Bonferroni correction (significance level $\alpha = 0.05$) on the \cref{tab:missing_modalities} results between our proposed EBR and the other baseline methods.
The null hypothesis that the proposed EBR and the other models follow the same distribution of AUC-ROC and AUC-PRCs with the chosen missingness rates, were rejected for the both Mortality and Readmission prediction tasks across all baselines, most often, with significantly low p-values, which in all cases, was lower than 0.01. It further provides evidence in support of the uniqueness of EBR in leveraging basis reallocation to free up rank bottlenecks as a novel mechanism to tackle missing modalities.

\begin{table}[!ht]
    \centering
    \begin{tabular}{lcccc}
        \toprule
        \textbf{Method} & \multicolumn{2}{c}{\textbf{Mortality}} & \multicolumn{2}{c}{\textbf{Readmission}} \\
        \cmidrule(lr){2-3} \cmidrule(lr){4-5}
        & \textbf{AUC-ROC} & \textbf{AUC-PRC} & \textbf{AUC-ROC} & \textbf{AUC-PRC} \\
        \midrule
        \textbf{CM-AE} & 0.0090 & 0.0077 & 0.0065 & 0.0035 \\
        \textbf{SMIL} & 0.0066 & 0.0053 & 0.0042 & 0.0066 \\
        \textbf{MT} & 0.0083 & 0.0082 & 0.0077 & 0.0065 \\
        \textbf{Grape} & 0.0027 & 0.0057 & 0.0058 & 0.0042 \\
        \textbf{M3-Care} & 0.0079 & 0.0031 & 0.0069 & 0.0039 \\
        \textbf{ShaSpec} & 0.0085 & 0.0062 & 0.0049 & 0.0075 \\
        \textbf{MUSE} & 0.0088 & 0.0079 & 0.0086 & 0.0089 \\
        \bottomrule
    \end{tabular}
    \caption{P-values of the Wilcoxon rank test with Holm–Bonferroni correction on the \cref{tab:missing_modalities} results between EBR and other baselines.}
    \label{tab:stat_comps}
\end{table}

\subsection{Polysemanticity}
Considering the results in \cref{fig:rank_repsim} (a) and (c), \cref{fig:exp_denoising}, and \cref{sec:effect_basis_realloc}, since there is no external source of noise in the fusion head, and encouraging monosemanticity through basis reallocation has a denoising effect, the noise that leads to the observed collapse must come from some cross-modal polysemantic interference.

To provide further evidence, we adapt the definition of polysemanticity based on neural capacity allocation from \citet{Scherlis2022PolysemanticityAC} to measure cross-modal polysemanticity as the amount of uncertainty in the assignment of a neuron to a particular modality. We train a two-layer ReLU network on weights from unimodal models to classify which modality the input models are optimized on. Next, we apply this modality-classifier on the weights of our multimodal fusion head and record the average cross-entropy (CE) in its outputs. Higher values of cross-entropy indicate higher levels of cross-modal polysemanticity, since the probability masses are spread out across multiple modalities. In \cref{tab:crossmodalpolysem}, we report the results on bi-modal training. The sharply lower relative CE for KD and EBR directly indicate the reduced cross-modal polysemantic interference under basis reallocation.

\begin{table}[!ht]
    \centering
    \begin{tabular}{lc}
        \toprule
        \textbf{Method} & \textbf{CE} \\
        \midrule
        Vanilla & 5.66 \\
        KD & 2.09 \\
        \midrule
        \textbf{EBR} & \textbf{0.59} \\
        \bottomrule
    \end{tabular}
    \caption{Empirically measuring cross-modal polysemantic interference as average cross-entropy (CE) in the modality classifier prediction.}
    \label{tab:crossmodalpolysem}
\end{table}

\subsection{Comparison with Contrastive and Generative Models}

\myparagraph{Cross-Modal Polysemantic Interference in multimodal contrastive learning}
We choose GMC \cite{poklukar2022geommcontrast} as our candidate multimodal contrastive learning algorithm for analyzing cross-modal polysemantic interference in multimodal contrastive learning.
We evaluate GMC by applying their proposed contrastive objective to our baseline representation learning setting on MIMIC-IV and report the results in terms of the lowest achieved training semantic loss in \cref{tab:entanglement_contrastive}.
As we can see, trends similar to that of our original setting reported in the main manuscript, in the semantic loss gap between the Multimodal Prefix and the Unimodal Baseline, play out when we perform a contrastive objective based fusion as reported in \citet{poklukar2022geommcontrast}. It further supports the claims in \cref{lemma:polysem_col} and \cref{thm:interference} that as the number of modalities increase, the modality undergoing collapse contributes less and less to the downstream representation used to encode the semantics, irrespective of the fusion strategy.

\begin{table}[!ht]
    \centering
    \begin{tabular}{lcccc}
        \toprule
        \textbf{Number of Modalities} & \textbf{2} & \textbf{3} & \textbf{4} & \textbf{5} \\
        \midrule
        Multimodal Prefix & 27.68 & 52.90 & 91.20 & 167.30 \\
        Unimodal Baseline & 7.97 & 6.55 & 5.33 & 9.55 \\
        \bottomrule
    \end{tabular}
    \caption{Lowest achieved training semantic loss with increasing number of modalities in the contrastive setting.}
    \label{tab:entanglement_contrastive}
\end{table}

\myparagraph{Rank Bottlenecks in Generative and Contrastive Models}
We choose MMVAE \cite{Shi2019Collapse} as our candidate generative model for analyzing rank bottlenecks.
Since the objective of generative modeling is somewhat different from the downstream application that we experimented with, to analyze MMVAE, we performed the experiment on their proposed MNIST-SVHN dataset, while for GMC \cite{poklukar2022geommcontrast}, since it is for general representation learning, we applied their proposed contrastive objective to our baseline setting on MIMIC-IV. In \cref{tab:rank_gen_contrast}, we report the results of our experiment, where the vanilla setting refers to the original model, without KD or EBR.

\begin{table}[!ht]
\centering
\begin{tabular}{lccccc}
    \toprule
    \multirow{2}{*}{\textbf{Method}} & \multicolumn{5}{c}{$\mathbf{\beta}$}      \\
    \cmidrule(lr){2-6}
    & \textbf{0}    & \textbf{2}    & \textbf{4}    & \textbf{6}    & \textbf{8}    \\
    \midrule
    \textbf{MMVAE (Unimodal Baseline)} & \multicolumn{5}{c}{198}          \\
    \midrule
    \textbf{MMVAE (Vanilla)}           & 477  & 421  & 398  & 110  & 96   \\
    \textbf{MMVAE + KD}              & 482  & 465  & 390  & 298  & 270  \\
    \textbf{MMVAE + EBR}             & 485  & 477  & 431  & 405  & 395  \\
    \midrule
    \textbf{GMC (Unimodal Baseline)} & \multicolumn{5}{c}{1255}         \\
    \midrule
    \textbf{GMC (Vanilla)}           & 1877 & 1330 & 1146 & 930  & 872  \\
    \textbf{GMC + KD}              & 1905 & 1676 & 1533 & 1427 & 1390 \\
    \textbf{GMC + EBR}             & 1912 & 1825 & 1709 & 1600 & 1588 \\
    \bottomrule
\end{tabular}
\caption{Representation ranks with increasing $\beta$ in generative (MMVAE) and contrastive (GMC) models.}
\label{tab:rank_gen_contrast}
\end{table}

We can see that in both the generative and contrastive settings, the ranks consistently drop as the strength of the modality undergoing collapse $\beta$
is increased. The drop is sharp around a critical point, where the rank goes below the unimodal baseline, depicting a form of phase transition, a phenomenon also observed in our original experiments (\cref{sec:exp_rank_bottlenecks} Observations and Analyses). Finally, the dropping rank can be counteracted by implicit (KD), and even more effectively, by explicit basis reallocation (EBR), which results in a much more stable rank across the range of different values of $\beta$.

\end{document}